\theoremstyle{plain}
\newtheorem{theorem}{Theorem}[section]
\theoremstyle{definition}
\newtheorem{definition}[theorem]{Definition}
\theoremstyle{remark}
\newenvironment{hproof}{%
  \proof}{\endproof}
\icmltitlerunning{Implicit Regularization in Feedback Alignment}
\begin{document}

\twocolumn[
\icmltitle{Implicit Regularization in Feedback Alignment \\ Learning Mechanisms for Neural Networks}



\icmlsetsymbol{equal}{*}

\begin{icmlauthorlist}
\icmlauthor{Zachary Robertson}{yyy}
\icmlauthor{Oluwasanmi Koyejo}{yyy}
\end{icmlauthorlist}

\icmlaffiliation{yyy}{Department of Computer Science, Stanford, California, United States}

\icmlcorrespondingauthor{Zachary Robertson}{zroberts@stanford.edu}

\icmlkeywords{Deep Learning, Implicit Regularization, Neural Networks, Bio-Plausible, ICML}

\vskip 0.3in
]



\printAffiliationsAndNotice{}  

\begin{abstract}

Feedback Alignment (FA) methods are biologically inspired local learning rules for training neural networks with reduced communication between layers. While FA has potential applications in distributed and privacy-aware ML, limitations in multi-class classification and lack of theoretical understanding of the alignment mechanism have constrained its impact. This study introduces a unified framework elucidating the operational principles behind alignment in FA. Our key contributions include: (1) a novel conservation law linking changes in synaptic weights to implicit regularization that maintains alignment with the gradient, with support from experiments, (2) sufficient conditions for convergence based on the concept of alignment dominance, and (3) empirical analysis showing better alignment can enhance FA performance on complex multi-class tasks. Overall, these theoretical and practical advancements improve interpretability of bio-plausible learning rules and provide groundwork for developing enhanced FA algorithms.

\end{abstract}

\section{Introduction}

Artificial neural networks (ANNs), inspired by biological neural networks, have revolutionized machine learning  \citet{mcculloch1943logical}. They were further developed in the connectionist framework, an approach to study cognition through the use of ANNs \cite{elman1996rethinking, medler1998brief}. This approach continues to demonstrate utility in modeling neural processes \cite{yamins2016using, yildirim2019integrative, richards2019deep, peters2021capturing}. Despite their effectiveness, many works highlight the non-local nature of backpropagation, which contradicts the local processing observed in biological neurons \cite{crick1989recent, lillicrap2016random, cornford2021learning}.

A key issue is the \textit{weight-transport problem}, where forward synaptic weights must equal feedback propagation weights during training \cite{grossberg1987competitive, lillicrap2016random}. Another issue is the contradiction of \textit{Dale's principle} which states neuron synaptic weights tend to stay either positive or negative \cite{eccles1976electrical, cornford2021learning}.

This work focuses on a promising alternative to backpropagation, the Feedback Alignment (FA) method, that circumvents the weight-transport problem by employing fixed, random feedback weights \cite{lillicrap2016random}. These methods have offered practical advantages in distributed, federated, and differentially private machine learning \cite{launay2020direct, lee2020differentially,jung2023lafd}. Despite its potential, FA's performance lags in multi-class classification tasks, and its alignment mechanism is not fully understood \cite{liao2016important, moskovitz2018feedback, bartunov2018assessing}. In this work we propose a novel theoretical framework that demystifies the alignment mechanism in FA, focusing on how alignment with the gradient can emerge. This framework not only sheds light on FA's operational principles but also accounts for its limitations in complex classification scenarios, marking a significant step towards more biologically plausible learning models. Our specific contributions are:

\begin{enumerate}
    \item \textbf{Theoretical Framework:} Introduction of a modular framework formalizing alignment and convergence properties to demystify the mechanisms behind FA, bridging gaps in understanding.
    \item \textbf{Implicit Regularization:} A conservation law linking changes in synaptic weights to regularization that inherently maintains feedback alignment, explaining the emergence of effects like Dale's principle.
    \item \textbf{Enhanced Alignment and Performance:} Empirical analysis showing techniques encouraging greater alignment, such as proper weight initialization, can enhance FA performance on complex multi-class tasks like CIFAR-100 and Tiny-ImageNet.
\end{enumerate}

We face several technical challenges establishing our results. A key novelty is that we identify for the first time a general implicit regularization of non-gradient based training methods. Critically, we leverage this finding to directly address the weight-transport problem, illustrating that effective learning can occur without mirrored synaptic weights. This allows us to identify initialization strategies for the feedback weights that allow us to connect alignment to convergence. Overall, addressing these technical challenges offers insights into how neural networks can maintain biological plausibility while achieving computational efficiency.

The remainder of this paper is organized as follows: Section \ref{related} discusses related work. Section \ref{prelim} provides a detailed background on learning models, the bio-plausibility problem, and other technical aspects we need to develop our results. Section \ref{framework} introduces our theoretical framework and its basic principles. Section \ref{empirical} discusses our empirical methodology and experimental results.  Finally, Section \ref{conclusion} concludes the paper with a summary of our contributions and suggestions for future research directions.

\section{Related Work}
\label{related}

\textbf{Motivating Bio-Plausible Learning:} The pursuit of bio-plausible learning mechanisms extends beyond mere academic interest, offering practical advantages in distributed, federated, and differentially private  machine learning \cite{launay2020direct, lee2020differentially,jung2023lafd}. \citet{launay2020direct} show FA methods can scale to modern deep learning scenarios and argue these methods can reduce communication overheads. \citet{lee2020differentially} show FA may be more suitable for differential privacy applications. \citet{jung2023lafd} show the usefulness of FA in a federated and differential private learning setting as well. Overall, this is similar to applications of approaches like signGD which also reduce overhead and are useful in distributed training \cite{bernstein2018signsgd, wang2021cooperative}. We leave the exploration of these connections to future work.

\textbf{Weight-Transport Problem:} The weight-transport problem is the main criticism of backpropagation in neuroscience, which has an unrealistic requirement neurons receive downstream synaptic weights to construct a backward pass \cite{grossberg1987competitive, crick1989recent}. Many approaches have tried to circumvent this problem by introducing a distinct set of feedback weights to propagate errors in the backward pass. \citet{kunin2020two} shows how these approaches can be organized into two categories: those that use layer-wise loss functions regularize the information between neighboring network layers \cite{bengio2014auto, lee2015difference, bartunov2018assessing} and those that encourage alignment between the forward and backward weights \cite{lillicrap2016random, liao2016important, moskovitz2018feedback, launay2020direct}.

The former category is broad and focuses on defining a local loss for each layer to avoid the weight-transport problem. One example is the target propagation method which encourages backward weights to locally invert forward outputs \cite{bengio2014auto, lee2015difference, meulemans2020theoretical, ernoult2022towards}. These methods work well on smaller datasets, but have struggled on larger more complicated tasks \cite{bartunov2018assessing,ernoult2022towards}.

Feedback alignment approaches address weight transport by showing backward weights don't have to equal the corresponding forward weights for effective learning \cite{lillicrap2016random, liao2016important, moskovitz2018feedback, launay2020direct}. \citet{lillicrap2016random} introduce the first feedback alignment algorithm by fixing feedback weights to random values at initialization. Surprisingly, experiments show that forward weights tend to align with their feedback weights during training \cite{liao2016important, song2021convergence}. Follow-up work improved upon this algorithm by allowing feedback weights to match the sign of the forward weights \cite{liao2016important, moskovitz2018feedback} which we refer to as sign feedback alignment (sign-FA). Other work has focused on eliminating the backward pass entirely using skip-connections \cite{nokland2016direct, launay2020direct, refinetti2021align} using a method called direct feedback alignment (DFA). 

\textbf{Feedback Alignment Theory:} Removing the need for a biological mechanism to continuously track backward weights is an important advance towards biological plausibility. Several works have focused on trying to improve our theoretical understanding of why FA methods work \cite{lillicrap2016random, lechner2020learning, refinetti2021align,song2021convergence, boopathy2022train}. In particular, some works have explored feedback alignment in linear and wide-width settings \cite{song2021convergence, boopathy2022train}. There have also been works suggesting that feedback alignment feature special dynamics that encourage learning \cite{lillicrap2016random, refinetti2021align}. There is also some work analyzing a variant of feedback alignment applied to finite and non-linear nets that they show aligns with the sign of the gradient \cite{lechner2020learning}. In comparison, our work establish similar results for (sign) FA algorithms plus convergence guarantees. 

\textbf{Deep Learning Theory:} Our conservation law is deeply connected to the implicit regularization phenomena observed in deep learning \cite{du2018algorithmic, phuong2020inductive, ji2020directional, lyu2021gradient}. \citet{du2018algorithmic} observes that the layer norms of deep ReLU networks are automatically balanced i.e increase at the same rates. \citet{ji2020directional} and \citet{lyu2021gradient} establish a result showing that deep homogeneous neural networks have weights that converge in the direction of KKT point of a margin maximization problem. Follow-up work shows that this implies generalization in certain linearly separable settings \cite{phuong2020inductive, frei2022implicit, boursier2022gradient}. This complements prior work observing benign-overfitting in deep learning models using data assumptions \cite{phuong2020inductive, frei2021proxy}. \citet{phuong2020inductive} assumes data satisfies a property called orthogonal separability and \citet{frei2021proxy} assumes data is high-dimensional. More recently, the concept of gradient dominance, or the Polyak-Łojasiewicz (PL) inequality, offers a framework for analyzing neural networks trained by gradient descent. \citet{frei2021proxy} argue that these concepts can be applied to understand the performance landscape of deep learning models.

We extend this literature in a few ways. First, we adapt and extend the work of \citet{du2018algorithmic} by deriving the implicit regularization of (sign) FA methods and then applying them to explain phenomena such as alignment and Dale's principle relevant to the bio-plausible learning community. Second, we make use of the simplifying data assumptions that have been useful for theoretical study \cite{phuong2020inductive, frei2021proxy}. Lastly, we develop an analogous criteria to gradient dominance and show how this concretely relates the alignment condition to convergence of the training objective. 

\section{Preliminaries}
\label{prelim}

In this section, we overview the basic notation, setting, background, and definitions used. Discussion of our framework and results are presented in the next section.

\textbf{Notation:} We denote scalar and vector quantities by lowercase script (e.g. $x,y,z$) and matrices by uppercase letters (e.g. $A, B, C$). When a matrix $W$ represents weights in a neural network layer we have $W_{i+1}[j,:]$ to indicate outgoing connections from the $j^{\text{th}}$ neuron in the $i^{\text{th}}$ layer. Similarly, $W_{i}[:,j]$ indicates incoming connections. For a vector $x$ we denote by $\Vert x \Vert$ the Euclidean norm. We use $\Vert \cdot \Vert_F$ and $\langle \cdot, \cdot \rangle$ to denote the Frobenius norm and inner-product, respectively. The expression $A \circ B$ represents the Hadamard (element-wise) product of two matrices or vectors of the same dimensions. We denote the trace of a square matrix $A$ by $\text{Tr}(A)$. We use the convention $\text{sign}(x) = 1$ for $z > 0$ and $\text{sign}(x) = -1$ otherwise. For any natural number $n \ge 1$ we define $[n] := \lbrace 1, \ldots, n \rbrace$. Given a differentiable function $f$ we use $\dot f$ for time derivatives and $f'$ otherwise.

\textbf{Neural Networks:} In this work, a neural network is a function $f : \mathbb{R}^d \to \mathbb{R}$ which consists of $L$ layers where the $i^{\text{th}}$ layer has width $m_i$ and operates on the input by applying a linear transformation followed by an activation function $\phi : \mathbb{R} \to \mathbb{R}$. So we write $f(x;\theta)$ to denote the output of the neural network given an input $x$ with weights defined as $\theta := (W_1, \ldots, W_L) \in \Omega$. We can express the layer pre-activation and output recursively as follows:
$$
h_i =  a_{i -1} W_i, \ a_i = \phi(h_i)
$$
where $h_i$ is the pre-activation and $a_i$ is the output for the $i^{\text{th}}$ layer. By convention we take $a_0 = x$. 

\textbf{Gradients and Bio-Plausible Optimization:} 

In this study we focus on training neural networks for binary classification tasks. Let $S := \lbrace (x_i, y_i) \rbrace_{i = 1}^n \subseteq \mathbb{R}^d \times \lbrace \pm 1 \rbrace$ be a binary-classification dataset. Recall that $f(\cdot;\theta) : \mathbb{R}^d \to \mathbb{R}$ is a neural network parameterized by $\theta := (W_1, \ldots, W_L)$. For a loss function $\ell : \mathbb{R} \to \mathbb{R}$ we define the empirical risk as follows:

$$
\mathcal{L}(f(x;\theta)) = \mathcal{L}(\theta) := \frac{1}{n} \sum_{i \in [n]} \ell(f(x;\theta) , y ) .
$$

We define $\mathcal{L}^* := \inf_{\theta} \mathcal{L}(\theta)$ as the optimal objective value. 

Backpropagation allows us to compute the contribution $\delta_i$ of each layer to the final error recursively:
$$
\delta_i = (\delta_{i+1}  W_{i+1}) \circ \phi'(h_i) , \ \delta_L = \nabla_{f} \mathcal{L}(f).
$$
We refer to the stacked vectors of contributions $\delta := (\delta_i, \ldots, \delta_L)^T$ as the backward pass. The final gradient is then given by:
$$
\nabla_{W_i} \mathcal{L}(\theta) = \delta_i a_{i-1}^T.
$$

Feedback alignment approximates the backward pass by replacing the exact transpose with feedback matrices. For each layer $i \in [L]$ we add a feedback matrix $B_i \in \mathbb{R}^{m_i \times m_{i+1}}$ which is randomly sampled at initialization when training with FA. We then modify the calculation of the backward pass as follows:
$$
\tilde \delta_i = (\delta_{i+1} \cdot  B_{i+1}) \circ \phi'(h_i) , \ \tilde \delta_L = \nabla_{f} \mathcal{L}(f)
$$

$$
\tilde \nabla_{W_i} \mathcal{L}(f_W) = \tilde \delta_i a_{i-1}^T
$$

where we have decorated quantities that differ from standard backpropagation. For sign-FA we set the feedback matrix equal to $\text{sign}(W_i)$ when training with sign-FA. When we set the forward and feedback matrices equal at initialization we will refer to this setup as adaFA following \cite{boopathy2022train}.

Throughout this work we will assume the network parameters are updated according to a flow -- infinitesimally small increments. We focus on (leaky) ReLU networks which are piece-wise linear as a function of their input and differentiable almost everywhere. For the dynamics, the parameter trajectory for the feedback alignment flow is assumed to be a continuous curve $\lbrace \theta^{(t)} | t \ge 0 \rbrace$ satisfying the following differential equation for almost all $t \ge 0$:

$$
\dot \theta^{(t)} := - \tilde \nabla \mathcal{L}(\theta^{(t)}) .
$$

The dynamics will not be fully specified by this equation due to non-differentiability on the boundary of linear regions, but we can still bound the rate of decrease. 

\begin{figure}
    \centering
    \includegraphics[width=0.4\textwidth]{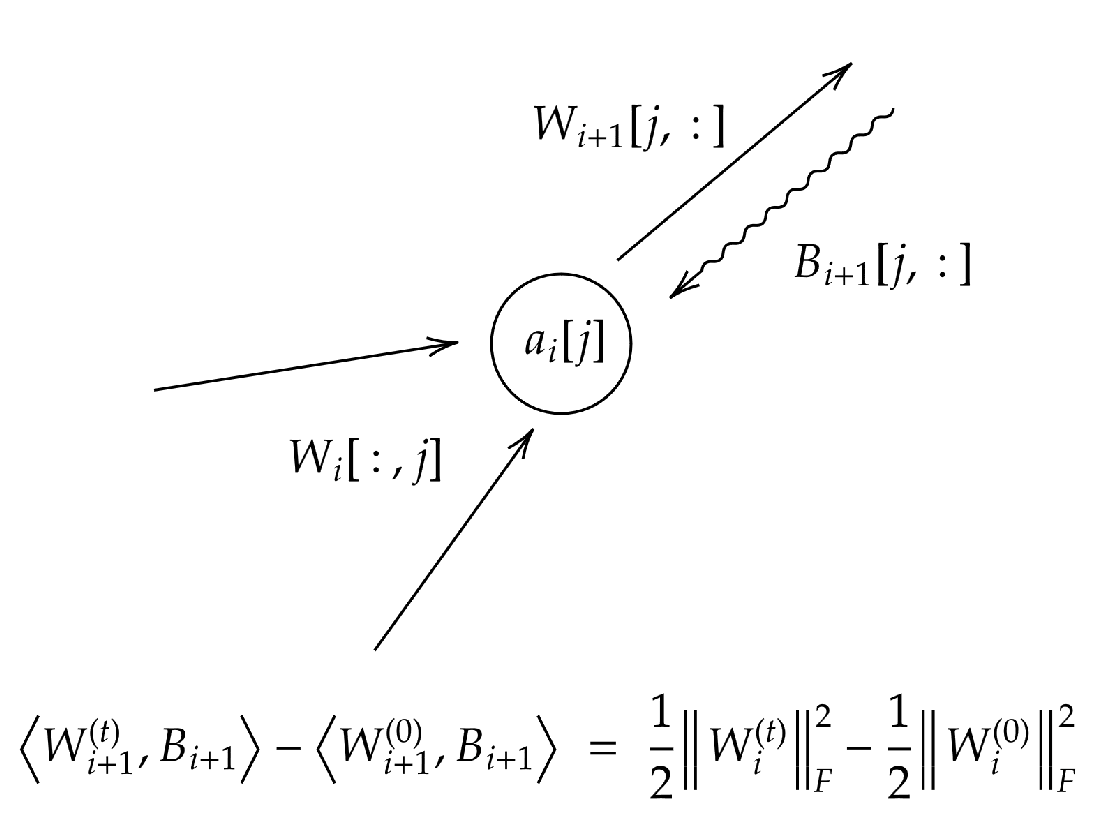}
    {\caption{\textbf{Implicit Regularization of Alignment.} We show an idealization of a neuron. The solid arrows indicate synaptic connections. In matrix form, $W_i[k,j]$ indicates a connection between the $k^{\text{th}}$ neuron in the $i^{\text{th}}$ layer to the $j^{\text{th}}$ neuron in the next layer. The matrix $B_{i+1}$ indicates a feedback connection. We establish a hard equality constraint that relates the alignment between incoming, outgoing, and feedback weights. See Theorem \ref{conservation_law} for more details.}}
    \label{fig:model}
\end{figure}

\section{Proposed Framework}
\label{framework}

In this section our goal is to develop a framework that highlights the underlying mechanisms behind feedback alignment. We propose a natural yet powerful idea -- factoring the decrease in the training objective into alignment and normative terms that model the relative "size" of the gradient and FA updating rules. This approach is supported empirically, it has been observed that feed-forward weights tend to align themselves with their respective feedback weights over the course of training \cite{lillicrap2016random}. 

We consider a binary classification dataset trained on with a neural neural network. The training dynamics under FA are the following:

$$
\dot \theta = - \tilde \nabla \mathcal{L}(\theta) , \quad \partial (\mathcal{L} \circ \theta)'(t) = \langle \nabla \mathcal{L}(\theta) , \dot \theta \rangle 
$$

$$
\Rightarrow \partial_t \mathcal{L}(\theta) = - \langle \nabla \mathcal{L}(\theta), \tilde \nabla \mathcal{L}(\theta) \rangle
$$

Our proposed framework is to factor the dynamics of the loss function into three key components that drive learning:

\begin{equation}
    \partial_t \mathcal{L}(\theta) = - \underbrace{\cos(\omega) }_{\text{angle}} \cdot \underbrace{\Vert \nabla \mathcal{L} \Vert_F}_{\text{gradient}} \cdot \underbrace{\Vert \tilde \nabla \mathcal{L} \Vert_F}_{\text{FA}}
\end{equation}

where $\omega$ is the angle between the gradient and FA update rules. This decomposition offers modularity which allows us to work towards a better understanding of bio-plausible learning rules by analyzing simpler components. In particular, positive alignment and lower-bounds on the norm terms would imply that the loss is driven down as training progresses. We show how to establish some results using the framework in Section \ref{results}. 

There are two natural definitions that help support this framing. First, we state a notion of weight alignment seen in previous work on feedback alignment and then propose a novel concept of alignment dominance which we will use later to analyze convergence.

\begin{definition}
\label{a_align}
    We say a network's forward weights $W_i$ align with their backward weights $B_i$ during training if there exists a constant $c > 0$ and time $T_c$ such that $\cfrac{\langle W_i , B_i \rangle}{\Vert W_i \Vert_F \cdot \Vert B_i \Vert_F} \ge c$ for all $t > T_c$. 
\end{definition}

This is a natural definition. Similar definitions appear in \cite{song2021convergence, lillicrap2016random}. 

\begin{definition}
\label{a_dom}
    We say a learning rule $\tilde \nabla \mathcal{L}(\theta(t))$ satisfies $(\alpha,\beta)$-alignment dominance with parameter $\alpha,\beta > 0$ if for all $\theta \in \lbrace \theta^{(t)} \rbrace_{t \ge 0}$ in the  trajectory:
$$
\langle \nabla \mathcal{L}(\theta) , \tilde \nabla \mathcal{L}(\theta) \rangle \ge \alpha \cdot (\mathcal{L}(\theta) - \mathcal{L}^*)^{\beta} .
$$

\end{definition}

Our condition is a natural extension of gradient dominance or the PL-inequality which has recently been proposed a unifying theme for studying neural network optimization \cite{frei2021proxy}. Intuitively, if an update rule aligns well with the gradient then it will enjoy similar convergence properties. The criterion we introduce  is stating that the reduction in loss using the feedback alignment, up to some constant, dominates sub-optimality gap of the network which ensures we only stop updating near a global optimum of the problem. 

In summary, we presented our framework for analyzing FA which offers insight into the auto-aligning property of FA. Additionally, the modular perspective provides a foundation for future work to integrate insights across components.

\section{Theoretical Results}
\label{results}

In this section, we present our main theoretical contributions around the Feedback Alignment (FA) method. We focus on three central themes: the relationship between alignment dominance and convergence, conservation of alignment, and auto-alignment by proper initialization. In particular, conservation of alignment is our main technical contribution and a key tool for establishing alignment dominance. Specifically, we apply this result to analyze the convergence properties of FA methods and to illustrate an application to show two-layer networks trained with (sign) FA converge. 

\subsection{Feedback Alignment During Training}

If alignment dominance for a learning rule can hold throughout training it is intuitive to think the method converges. The following result confirms this.

\begin{restatable}[Convergence under Alignment Dominance]{theorem}{convergence}
\label{convergence_result}
For any neural network with a.e. differentiable activation, trained on a fixed dataset, and using training loss $\mathcal{L}$ with $\mathcal{L}^* = 0$ such that training satisfies $(\alpha,\beta)$-alignmant dominance we have the following guarantee on the training objective:
$$
\mathcal{L}(\theta^{(t)}) \le \mathcal{L}(\theta^{(0)}) \cdot e^{-\alpha \cdot t}, \quad \beta = 1 $$
$$
\mathcal{L}(\theta^{(t)}) \le \left( \cfrac{\beta-1}{\alpha \cdot t + \frac{\beta-1}{\mathcal{L}(\theta^{(0)})}} \right)^{\cfrac{1}{\beta-1}} , \quad \beta > 1 .
$$

\end{restatable}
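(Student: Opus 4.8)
The plan is to reduce the statement to a scalar differential inequality for the function $g(t) := \mathcal{L}(\theta^{(t)})$ and then integrate it. First I would combine the loss dynamics derived in Section \ref{framework}, namely $\partial_t \mathcal{L}(\theta^{(t)}) = -\langle \nabla \mathcal{L}(\theta^{(t)}), \tilde \nabla \mathcal{L}(\theta^{(t)}) \rangle$, with the $(\alpha,\beta)$-alignment dominance hypothesis of Definition \ref{a_dom}. Since $\mathcal{L}^* = 0$, that hypothesis reads $\langle \nabla \mathcal{L}, \tilde \nabla \mathcal{L} \rangle \ge \alpha\, \mathcal{L}^{\beta}$, so substituting it into the dynamics gives the autonomous differential inequality
$$
\dot g(t) \le -\alpha\, g(t)^{\beta},
$$
valid for almost every $t \ge 0$, with $g(t) \ge 0$ throughout because $\mathcal{L} \ge \mathcal{L}^* = 0$.

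For $\beta = 1$ I would use the integrating-factor form of Gr\"onwall's inequality: the bound $\dot g \le -\alpha g$ implies $\frac{d}{dt}\bigl(e^{\alpha t} g(t)\bigr) \le 0$, so $e^{\alpha t} g(t)$ is non-increasing and hence $g(t) \le g(0)\, e^{-\alpha t}$. For $\beta > 1$, as long as $g(t) > 0$ I would separate variables: dividing by $g^{\beta}>0$ and rewriting the left side as a derivative gives
$$
\frac{d}{dt}\bigl(g(t)^{1-\beta}\bigr) = (1-\beta)\, g^{-\beta}\dot g \ge (\beta-1)\alpha,
$$
where the inequality flips because the factor $1-\beta$ is negative. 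Integrating from $0$ to $t$ yields $g(t)^{1-\beta} \ge g(0)^{1-\beta} + (\beta-1)\alpha t$, and applying the decreasing map $x \mapsto x^{1/(1-\beta)}$ (which reverses the inequality) produces the claimed polynomial decay bound. If $g$ reaches $0$ at some finite time the bound holds trivially thereafter, so the derivation covers all $t$.

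The main obstacle is rigor under non-smoothness: for (leaky) ReLU networks the trajectory is only absolutely continuous and $\mathcal{L}(\theta^{(t)})$ is differentiable only almost everywhere, so both the chain-rule identity for $\partial_t \mathcal{L}$ and the integration steps must be justified for a.e.-differentiable functions rather than $C^1$ ones. I would address this by invoking the fact flagged in the preliminaries, that one can still bound the rate of decrease even on the boundary of linear regions, so that the inequality $\dot g \le -\alpha g^{\beta}$ holds for almost all $t$; I would then run the comparison argument in its absolutely-continuous form, using the fundamental theorem of calculus for absolutely continuous $g$ to integrate the a.e. inequality. This reduces everything to a standard ODE comparison once the a.e. differential inequality is established, so the only genuinely delicate point is confirming that the non-differentiable times form a measure-zero set along the flow and contribute nothing to the integral.
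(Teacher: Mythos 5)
Your proposal is correct and follows essentially the same route as the paper: substitute the $(\alpha,\beta)$-alignment-dominance bound (with $\mathcal{L}^*=0$) into the loss dynamics to obtain the scalar a.e.\ differential inequality $\dot g \le -\alpha\, g^{\beta}$, then integrate it (Gr\"onwall-type argument for $\beta=1$, separation of variables for $\beta>1$), which is exactly the paper's argument with your added care about absolute continuity and the trajectory hitting zero. One remark: your $\beta>1$ integration is the more careful one --- the paper's antiderivative step replaces $\tfrac{1}{\beta-1}$ by $(\beta-1)$ and later drops the exponent on $\mathcal{L}(\theta^{(0)})$, so your bound $g(t) \le \left(g(0)^{1-\beta} + (\beta-1)\alpha t\right)^{1/(1-\beta)}$ carries the correct constants and matches the theorem's displayed bound only up to those constant-factor slips.
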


We defer the proof to the appendix. While our further results specifically target FA it is worth emphasizing that this condition could potentially be applied to other bio-plausible learning rules. It is known that in the kernel regime FA stays close to the gradient throughout training \cite{boopathy2022train}. It is also known that in the kernel regime we have gradient dominance \cite{frei2021proxy}. Therefore, alignment dominance will also be satisfied in the kernel-regime. However, we do not develop any finite-width approximations here since our purpose is to introduce the basic tools that are useful for working in the proposed framework. However, we consider the (shallow) finite-width case in Proposition \ref{application_two_layer} to demonstrate how these tools can be used together. 

To establish alignment dominance, it is useful to characterize a general regularization behavior of individual neurons in the network. We formalize this in the following theorem:

\begin{restatable}[]{theorem}{conservation}
\label{conservation_law}
Suppose that we apply (sign) feedback alignment to a vector-output (leaky) ReLU network trained with any differentiable loss. Then the flow of the layer weights under feedback alignment for a.e. $t \in \mathbb{R}_{\ge 0}$ maintains,
\begin{equation}
\begin{split}
\langle W_{i+1}^{(t)}[j,:], B_{i+1}^{(t)}[j,:] \rangle - \langle W_{i+1}^{(0)}[j,:], B_{i+1}^{(0)}[j,:] \rangle \\
= \frac{1}{2} \Vert W_{i}^{(t)}[:, j] \Vert_F^2 -  \frac{1}{2} \Vert W_{i}^{(0)}[:, j] \Vert_F^2 .
\end{split}
\label{con_eq}
\end{equation}
\end{restatable}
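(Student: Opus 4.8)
The plan is to show that the two sides of \eqref{con_eq} have identical time derivatives for a.e.\ $t$ and then integrate from $0$ to $t$. Fix a neuron $j$ in layer $i$ and abbreviate the outgoing-alignment term $A_j(t) := \langle W_{i+1}^{(t)}[j,:], B_{i+1}^{(t)}[j,:] \rangle$ and the incoming-norm term $N_j(t) := \tfrac12 \Vert W_i^{(t)}[:,j] \Vert_F^2$. The entire claim then reduces to establishing $\dot A_j = \dot N_j$ neuron-by-neuron; since nothing in the argument touches the output layer structure, the vector-output case is immediate.

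First I would differentiate $N_j$. As $W_i[:,j]$ collects the incoming weights of neuron $j$, the flow $\dot\theta = -\tilde\nabla\mathcal{L}(\theta)$ together with $\tilde\nabla_{W_i}\mathcal{L} = \tilde\delta_i a_{i-1}^T$ gives $\dot W_i[:,j] = -\tilde\delta_i[j]\, a_{i-1}$, so that $\dot N_j = \langle W_i[:,j], \dot W_i[:,j] \rangle = -\tilde\delta_i[j]\,\langle W_i[:,j], a_{i-1}\rangle = -\tilde\delta_i[j]\, h_i[j]$, using $h_i[j] = \langle a_{i-1}, W_i[:,j]\rangle$. With a dataset of size $n$ one replaces each product by its empirical average over examples; the identity holds termwise, so this changes nothing.

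Next I would differentiate $A_j$. Because $W_{i+1}[j,:]$ collects the outgoing weights of neuron $j$, the flow gives $\dot W_{i+1}[j,:] = -a_i[j]\,\tilde\delta_{i+1}^T$, hence $\dot A_j = -a_i[j]\,\langle \tilde\delta_{i+1}, B_{i+1}[j,:]\rangle$, where I treat $B_{i+1}$ as locally constant --- immediate for fixed random feedback, and valid a.e.\ for sign-FA since $\mathrm{sign}(\cdot)$ is piecewise constant and flips only on a measure-zero set of times. The crucial observation is that $\langle \tilde\delta_{i+1}, B_{i+1}[j,:]\rangle$ is exactly the quantity produced by the FA backward recursion: reading off the $j$-th component of $\tilde\delta_i = (\tilde\delta_{i+1} B_{i+1})\circ \phi'(h_i)$ yields $\langle \tilde\delta_{i+1}, B_{i+1}[j,:]\rangle = \tilde\delta_i[j]/\phi'(h_i[j])$ wherever $\phi'(h_i[j])\neq 0$, so $\dot A_j = -a_i[j]\,\tilde\delta_i[j]/\phi'(h_i[j])$.

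The two derivatives are reconciled by the defining homogeneity of (leaky) ReLU, $\phi(z) = z\,\phi'(z)$, which holds at every differentiable point: it gives $a_i[j] = \phi(h_i[j]) = h_i[j]\,\phi'(h_i[j])$, and substituting cancels the $\phi'(h_i[j])$ factor to leave $\dot A_j = -h_i[j]\,\tilde\delta_i[j] = \dot N_j$. The nondifferentiable case $\phi'(h_i[j]) = 0$ (inactive ReLU) is handled separately: there $a_i[j] = 0$ and $\tilde\delta_i[j] = 0$, so both derivatives vanish and equality is trivial. I expect the main obstacle to be bookkeeping rather than conceptual --- pinning down the index/transpose conventions of the FA backward pass so that $B_{i+1}[j,:]$ pairs correctly with $\tilde\delta_{i+1}$, and rigorously justifying the a.e.\ differentiability across linear-region boundaries and (for sign-FA) across sign flips so that integrating $\dot A_j = \dot N_j$ over $[0,t]$ is legitimate.
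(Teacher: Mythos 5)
Your proof is correct, and at its core it establishes the same identity as the paper --- that for a.e.\ $t$ the two rates coincide, $\langle \dot W_i[:,j], W_i[:,j]\rangle = \langle \dot W_{i+1}[j,:], B_{i+1}[j,:]\rangle$, followed by integration in time --- but your execution is genuinely more elementary. The paper works globally: it writes the network as $x W_1 A_1 \cdots A_{L-1} W_L$ with diagonal activation matrices $A_i$ (the homogeneity of the (leaky) ReLU is silently absorbed into this representation), computes the full FA matrix updates, and equates the two rates via cyclicity of the trace together with a masking identity $W_i[:,j] A_i B_{i+1} = W_i A_i B_{i+1}[j,:]$. You instead work per neuron directly from the forward/backward recursions, computing $\dot N_j = -\tilde\delta_i[j]\, h_i[j]$ and $\dot A_j = -a_i[j]\,\langle \tilde\delta_{i+1}, B_{i+1}[j,:]\rangle$, and reconcile them by invoking homogeneity $\phi(z) = z\,\phi'(z)$ explicitly, with the inactive-ReLU case $\phi'(h_i[j]) = 0$ handled separately. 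Your route buys transparency: it pinpoints exactly where the (leaky) ReLU hypothesis enters, avoids the trace bookkeeping and the masking lemma entirely, and makes the vector-output claim visibly free; the paper's matrix formulation is bulkier but sets up the extensions it mentions (layer-wise sums, weight sharing) with little additional work.

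One point needs repair, and it is the same point the paper itself treats somewhat loosely: your justification for sign-FA, that $\mathrm{sign}(\cdot)$ ``flips only on a measure-zero set of times,'' is not quite the right reason and is not automatic (a weight can cross zero at non-isolated times, or sit at zero on a set of positive measure). The correct observation --- the one the paper's proof gestures at with its jump-sum argument --- is that $B_{i+1}[j,k] = \mathrm{sign}(W_{i+1}[j,k])$ can only jump at times when the corresponding weight entry $W_{i+1}[j,k]$ equals zero, so the jump contributes nothing to the alignment inner product. Concretely, under sign-FA one has $A_j(t) = \Vert W_{i+1}^{(t)}[j,:]\Vert_1$, which is absolutely continuous (a Lipschitz function of an absolutely continuous curve), and its a.e.\ derivative equals $\langle \dot W_{i+1}[j,:], \mathrm{sign}(W_{i+1}[j,:])\rangle$ because on the level set $\lbrace t : W_{i+1}^{(t)}[j,k]=0 \rbrace$ one has $\dot W_{i+1}^{(t)}[j,k]=0$ for a.e.\ such $t$. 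With that patch, integrating $\dot A_j = \dot N_j$ over $[0,t]$ is legitimate and your argument goes through.
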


The proof of Theorem \ref{conservation_law} is available in Appendix \ref{proofs}. Figure \ref{fig:model} is useful to ground the result. It implies that the angle between forward and backward weights in the output layer is never obtuse with suitable initialization, highlighting an inherent regularization of FA towards maintaining alignment. It is also worth emphasizing that FA and sign-FA share the same conservation law.

Since the result is proven at the neuron-level there are many possible variations. For example, summing over the neurons yields a layer-wise alignment condition. It is also straight-forward to extend the result to handle weight-sharing, such as in convolution layers. In general, the theorem can be interpreted as hard-constraint on the dynamics of individual neuron weights implying that the weight trajectory lies on a lower-dimensional manifold than otherwise expected.  

For comparison to related work, \citet{lillicrap2016random} constructs a matrix-valued invariant for feedback alignment, but this only applies to linear settings. Also, \citet{du2018algorithmic} establishes a related result that applies to gradient flow. Namely, layers are "balanced" during training meaning the square of the norms change at the same rates.

The main practical application of this result is that we can give a guarantee alignment of FA during training of a scalar output network with proper initialization. It will be instructive to consider the situation with sign-FA before stating our result. 

\begin{restatable}[]{proposition}{signangle}
\label{sign_angle}
Suppose that we train a vector-output neural network with sign-FA then for any $n$ parameter layer $i \in [L]$ and a.e. $t \in \mathbb{R}_{\ge 0}$ we have:
\[
\cfrac{\langle W_i^{(t)} , B_i^{(t)} \rangle}{\Vert W_i^{(t)} \Vert_F \cdot \Vert B_i^{(t)} \Vert} \ge \cfrac{1}{\sqrt{n}} .
\]
\end{restatable}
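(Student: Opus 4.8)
The key observation is that for sign-FA the alignment bound is \emph{automatic}: since $B_i^{(t)} = \text{sign}(W_i^{(t)})$ by definition, the claim reduces to a pointwise algebraic inequality on the current weights and requires no appeal to the conservation law (Theorem \ref{conservation_law}) or to the dynamics at all. The plan is therefore to evaluate each of the three quantities in the ratio explicitly and then invoke an elementary norm comparison. The ``a.e. $t$'' qualifier enters only because the trajectory itself is defined for almost all $t$; the inequality holds pointwise wherever the weights are defined.

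First I would expand the numerator. Viewing the layer weights as a vector of their $n$ entries, the inner product with the sign matrix gives
$$
\langle W_i^{(t)}, B_i^{(t)} \rangle = \sum_{j,k} W_i^{(t)}[j,k]\,\text{sign}(W_i^{(t)}[j,k]) = \sum_{j,k} |W_i^{(t)}[j,k]| ,
$$
which is the entrywise $\ell_1$ norm of $W_i^{(t)}$. Under the stated convention $\text{sign}(0) = -1$ one must check the zero entries, but these contribute $0$ to both sides, so the identity is unaffected. Next, since every entry of $B_i^{(t)} = \text{sign}(W_i^{(t)})$ is $\pm 1$ and the layer has $n$ parameters, its Frobenius norm is exactly $\Vert B_i^{(t)} \Vert = \sqrt{n}$.

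Substituting these evaluations, the ratio becomes $\sum_{j,k} |W_i^{(t)}[j,k]| \big/ \bigl(\sqrt{n}\,\Vert W_i^{(t)} \Vert_F\bigr)$, so it suffices to show that the $\ell_1$ norm of $W_i^{(t)}$ dominates its Frobenius ($\ell_2$) norm. This is the final step: for any vector $v$ one has
$$
\Vert v \Vert_1^2 = \sum_a v_a^2 + 2\sum_{a<b} |v_a|\,|v_b| \ge \sum_a v_a^2 = \Vert v \Vert_2^2 ,
$$
since the cross terms are nonnegative. Applied to the vectorized $W_i^{(t)}$ this yields $\sum_{j,k}|W_i^{(t)}[j,k]| \ge \Vert W_i^{(t)} \Vert_F$, and hence the ratio is at least $1/\sqrt{n}$, as claimed.

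Unlike the plain-FA case, I do not expect a substantive obstacle here: the entire content is the comparison $\Vert \cdot \Vert_1 \ge \Vert \cdot \Vert_2$, and the only genuine care needed is the bookkeeping around the sign-at-zero convention and the degenerate trajectory point $W_i^{(t)} = 0$, where the ratio is undefined and the statement is vacuous. The value of the proposition is conceptual rather than technical: it shows the alignment lower bound holds for sign-FA identically in $t$ and independently of initialization, which is precisely what motivates the harder plain-FA guarantee, where $B_i$ is a \emph{fixed} random matrix and the conservation law must instead be used to recover an analogous bound.
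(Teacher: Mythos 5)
Your proposal is correct and takes essentially the same route as the paper's proof: both reduce the claim to the identity $\langle W_i, B_i \rangle = \Vert W_i \Vert_1$ together with the norm comparison $\Vert W_i \Vert_F \le \Vert W_i \Vert_1 \le \sqrt{n}\,\Vert W_i \Vert_F$ and the fact that $\Vert B_i \Vert = \sqrt{n}$, with no appeal to the conservation law or the dynamics. The only differences are cosmetic ones in your favor: you prove the $\ell_1$-versus-$\ell_2$ inequality from scratch rather than citing equivalence of norms, and you explicitly handle the $\mathrm{sign}(0)$ convention and the degenerate case $W_i^{(t)} = 0$, which the paper leaves implicit.
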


\begin{proof}

Recall that in sign-FA the feedback weights evolve according to $B_i^{(t)} := \text{sign}(W_i^{(t)})$. It is possible to get a direct bound in this setting using the equivalence of norms which states:
$$
\Vert W_i \Vert_F \le \Vert W_i \Vert_1 = \langle W_i , B_i \rangle \le \sqrt{n} \Vert W_i \Vert_F .
$$

We can manipulate this expression directly to obtain the desired result:
$$
\Rightarrow \cfrac{\Vert W_i \Vert_F}{\Vert W_i \Vert_F \sqrt{n}} \le \cfrac{\langle W_i , B_i \rangle}{\Vert W_i \Vert_F \sqrt{n}} \le \cfrac{\sqrt{n} \Vert W_i \Vert_F}{\Vert W_i \Vert_F \sqrt{n}}$$
$$
\Rightarrow \cfrac{1}{\sqrt{n}} \le \cfrac{\langle W_i , B_i \rangle}{\Vert W_i \Vert_F \sqrt{n}} \le 1 .
$$

\end{proof}

We emphasize the argument holds for multi-output networks when considering sign-FA. We have a similar result for the output feedback weights for a scalar output network trained with FA. 

\begin{restatable}[]{lemma}{angle}
\label{angle}
Suppose that we initialize a scalar output network with $w_{L}^{(0)}[j] = b_{L}[j] = 1$ such that $\Vert w_{L-1}^{(0)}[j] \Vert < \sqrt{2} \cdot \Vert w_{L}^{(0)}[j] \Vert$ then for a.e. $t \in \mathbb{R}_{\ge 0}$ we have $w_L^{(t)} > 0$ and the following bound:
\[
\cfrac{\langle w_L^{(t)} , b_L^{(t)} \rangle}{\Vert w_L^{(t)} \Vert_F \cdot \Vert b_L^{(t)} \Vert} \ge \cfrac{1}{\sqrt{n}} .
\]
\end{restatable}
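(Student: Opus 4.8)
The plan is to specialize the neuron-level conservation law of Theorem~\ref{conservation_law} to the output layer. Setting $i+1=L$ and using that a scalar-output network has $W_L[j,:]=w_L[j]$ and $B_L[j,:]=b_L[j]$ as scalars, the identity for neuron $j$ of layer $L-1$ becomes $w_L^{(t)}[j]\,b_L^{(t)}[j]-w_L^{(0)}[j]\,b_L^{(0)}[j]=\tfrac12\Vert w_{L-1}^{(t)}[j]\Vert^2-\tfrac12\Vert w_{L-1}^{(0)}[j]\Vert^2$, where $w_{L-1}[j]$ denotes the incoming column $W_{L-1}[:,j]$. Because this is (non-sign) FA, the feedback vector is frozen at initialization, so $b_L^{(t)}[j]=b_L^{(0)}[j]=1$; together with $w_L^{(0)}[j]=1$ this rearranges to the explicit formula $w_L^{(t)}[j]=1-\tfrac12\Vert w_{L-1}^{(0)}[j]\Vert^2+\tfrac12\Vert w_{L-1}^{(t)}[j]\Vert^2$.

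The first substantive step is the sign claim $w_L^{(t)}>0$. The hypothesis $\Vert w_{L-1}^{(0)}[j]\Vert<\sqrt2\,\Vert w_L^{(0)}[j]\Vert=\sqrt2$ is precisely what forces the constant part $1-\tfrac12\Vert w_{L-1}^{(0)}[j]\Vert^2$ to be strictly positive; since the remaining term $\tfrac12\Vert w_{L-1}^{(t)}[j]\Vert^2$ is non-negative for every $t$, each coordinate $w_L^{(t)}[j]$ stays strictly positive for a.e.\ $t$. I expect this to be the main obstacle, not because it is computationally hard but because it is the only point where the specific threshold $\sqrt2$ enters and where the dynamical conservation law is converted into a pointwise sign guarantee; the rest is algebra that would fail without it.

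With positivity established, the bound follows by the same $\ell_1/\ell_2$ comparison used in Proposition~\ref{sign_angle}. Since $b_L^{(t)}$ is the all-ones vector and every coordinate of $w_L^{(t)}$ is positive, $\langle w_L^{(t)},b_L^{(t)}\rangle=\sum_j w_L^{(t)}[j]=\Vert w_L^{(t)}\Vert_1$, while $\Vert b_L^{(t)}\Vert=\sqrt n$. Invoking the elementary inequality $\Vert w_L^{(t)}\Vert_1\ge\Vert w_L^{(t)}\Vert_F$ then yields $\frac{\langle w_L^{(t)},b_L^{(t)}\rangle}{\Vert w_L^{(t)}\Vert_F\,\Vert b_L^{(t)}\Vert}\ge\frac{\Vert w_L^{(t)}\Vert_F}{\sqrt n\,\Vert w_L^{(t)}\Vert_F}=\frac1{\sqrt n}$, as claimed.
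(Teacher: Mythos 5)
Your proof is correct and follows essentially the same route as the paper: specialize the conservation law of Theorem~\ref{conservation_law} to the output layer, use the initialization $w_L^{(0)}[j]=b_L^{(0)}[j]=1$ together with the $\sqrt{2}$ threshold to force $w_L^{(t)}[j]>0$, and then conclude via the $\ell_1$/$\ell_2$ norm comparison from Proposition~\ref{sign_angle}. If anything, your explicit rearrangement $w_L^{(t)}[j]=1-\tfrac12\Vert w_{L-1}^{(0)}[j]\Vert^2+\tfrac12\Vert w_{L-1}^{(t)}[j]\Vert^2$ isolates the strictly positive constant from the non-negative time-varying term more cleanly than the paper's inequality chain, which ends with the slightly imprecise step $\ge \tfrac12\Vert w_{L-1}^{(t)}[j]\Vert^2 > 0$.
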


\begin{proof}

By assumption, we initialize $w_{L}^{(0)}[j] = b_{L}[j]$ such that $\Vert w_{L-1}^{(0)}[j] \Vert < \sqrt{2} \cdot \Vert w_{L}^{(0)}[j] \Vert$ and so Theorem \ref{conservation_law} implies the following:
$$
\langle w_{L}^{(t)}[j], b_{L}[j] \rangle $$
$$= \frac{1}{2} \Vert w_{L-1}^{(t)}[j] \Vert_F^2 + (\langle w_{L}^{(0)}[j], b_{L} \rangle -  \frac{1}{2} \Vert w_{L-1}^{(0)}[j] \Vert_F^2)$$
$$
= \frac{1}{2} \Vert w_{L-1}^{(t)}[j] \Vert_F^2 + (\Vert w_{L}^{(0)}[j] \Vert_F^2 -  \frac{1}{2} \Vert w_{L-1}^{(0)}[j] \Vert_F^2)$$
$$
\ge \frac{1}{2} \Vert w_{i}^{(t)}[j] \Vert_F^2  > 0.
$$
In the last step we applied our norm assumption. This means forward weights match the sign of their feedback weights in the output layer so we are done after applying equivalence of norms as in Lemma \ref{sign_angle}. 

\end{proof}

\begin{figure}[ht]
\centering
\includegraphics[width=1.0\linewidth]{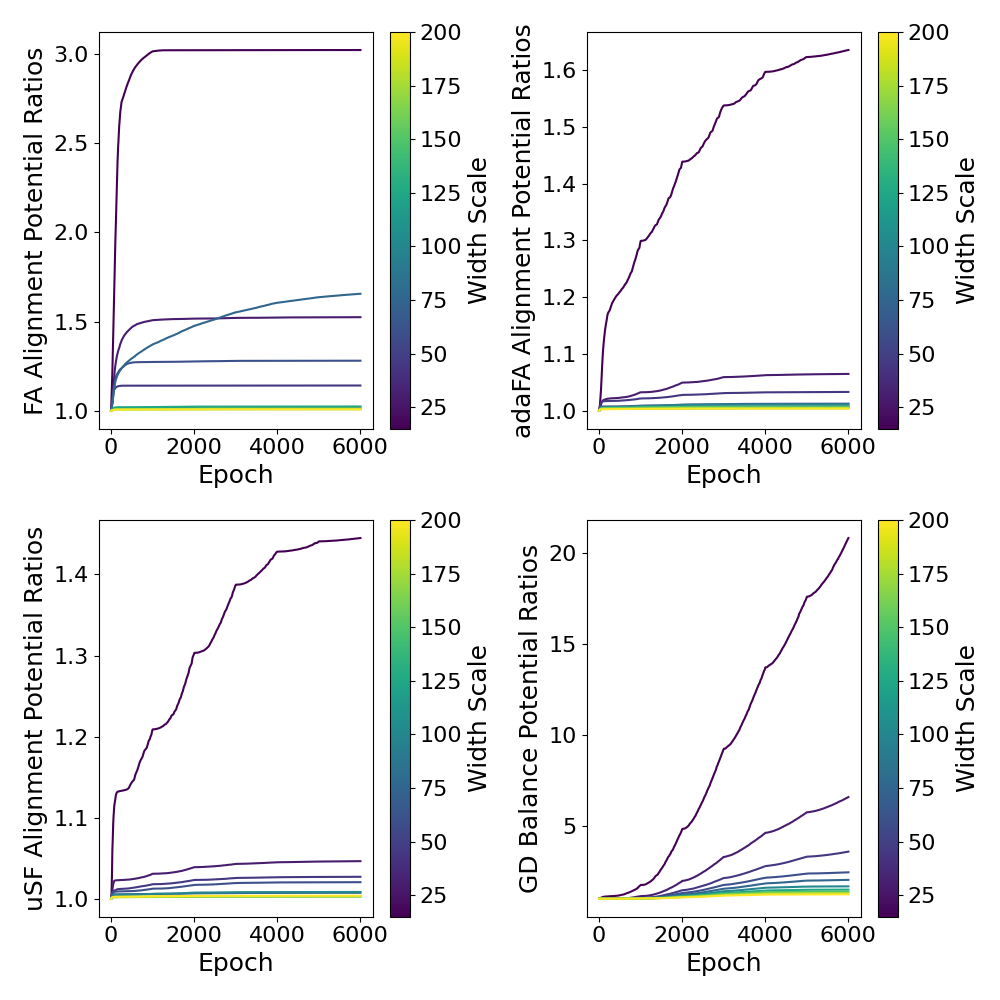}
\caption{\textbf{Conservation of Alignment.} We measure the "potential" $\langle W_{2}^{(t)}, B_{2} \rangle -  \frac{1}{2} \Vert W_1(t) \Vert_F^2$ during training on MNIST with varying width. We normalize by the value at initialization and so Theorem \ref{conservation_law} predicts this measure to be constant and equal to one. For gradient descent there is a similar result; change of each layer's squared norm is equal \cite{du2018algorithmic}. See Section \ref{empirical} for more details.}
\label{fig:Conservation}
\end{figure}

The corollary shows that certain initializations can inherently regularize the network towards maintaining alignment between forward and backward weights during training. This could be compared with \cite{song2021convergence} who show a matching upper-bound on alignment in linear settings. Together, these results help better characterize alignment as a function of network width. We also want to emphasize that this result implies that two-layer neural networks trained with FA can have output neurons with constant sign. This mirrors Dale's principle in neuroscience states that neurons tend to have synaptic connections that are either positive or negative \cite{eccles1976electrical, cornford2021learning}. \citet{lechner2020learning} studies a variant of FA that is monotone which means it satisfies this principle. Here we show how the principle can emerge naturally. 

\subsection{Connecting Alignment to Convergence}

In the previous sections, we presented a framework and foundational tools that help explain alignment and convergence in Feedback Alignment. This section aims to focus on the direct influence of alignment on convergence guarantees, offering a more granular understanding of how our framework can be used to analyze FA. We consider the following types of datasets, as defined below. 

\begin{restatable}{definition}{basicdata}
\label{s_ortho}
    We say that a dataset $S := \lbrace (x_i, y_i) \rbrace_{i \in [n]}$ is $\gamma$-orthogonal separable whenever $\langle x_i y_i , x_j y_j \rangle \ge \gamma$ for some constant $\gamma > 0$.
\end{restatable}

\begin{restatable}{definition}{linedata}
\label{n_ortho}
    We say that a dataset $S := \lbrace (x_i, y_i) \rbrace_{i \in [n]}$ is $(\gamma, \epsilon)$-nearly orthogonal whenever $\min_i \Vert x_i \Vert_2^2 \ge n \cdot (\gamma + \epsilon)$ where $\gamma := \max_{i \not = j} \vert \langle x_i,  x_j \rangle \vert > 0$ and $\epsilon > 0$.
\end{restatable}

\begin{figure*}[ht]
\centering
\includegraphics[width=1.0\textwidth]{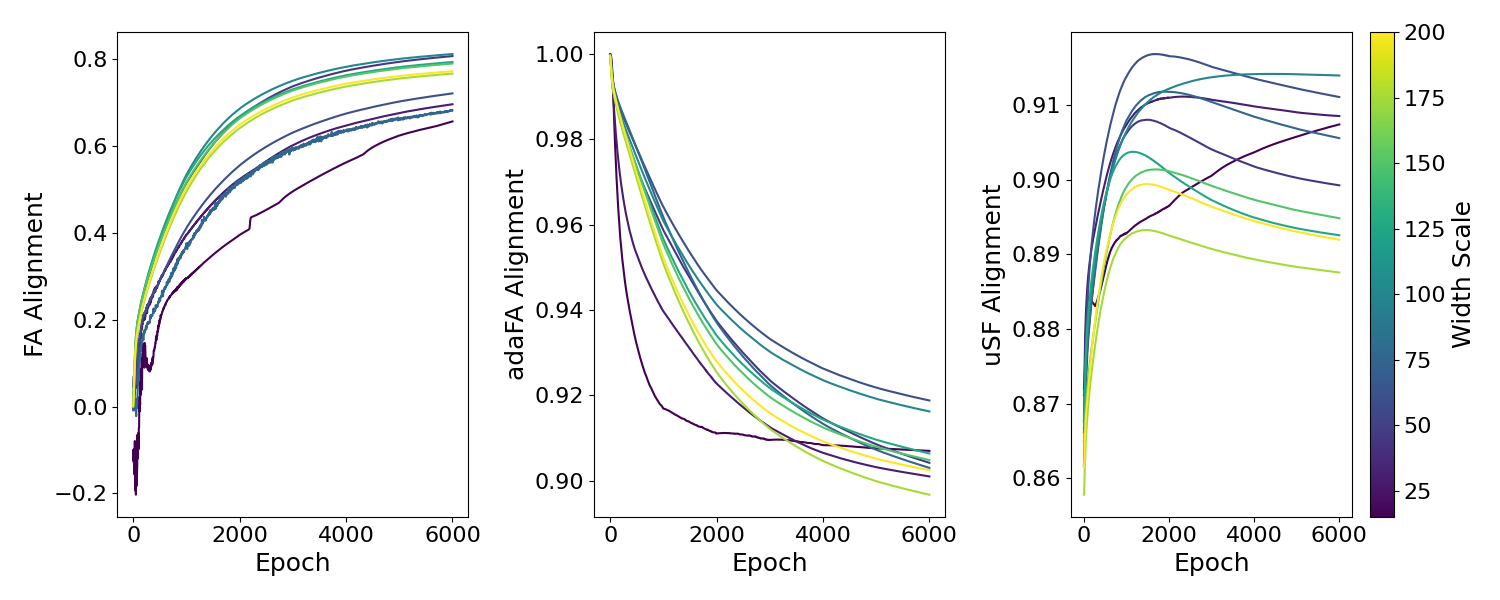}
\caption{\textbf{Alignment During Training:} We plot the cosine of the angle between forward ($W_2^{(t)}$) and feedback ($B_2$) weights during training on Noisy MNIST with 20\% label noise across varying network widths. See Section \ref{empirical} for more details.}
\label{fig:Alignment_Evolution}
\end{figure*}

These types of datasets have previously appeared recently in the deep learning literature to investigate implicit regularization towards benign overfitting \cite{phuong2020inductive,frei2022implicit}. In particular, any dataset $S$ satisfying either orthogonality property is linearly separable with large margin. We leverage these assumptions to establish our convergence guarantees for (sign) FA:

\begin{restatable}{proposition}{application}
\label{application_two_layer}

For any two-layer network parameterized by $\theta := (W_1, w_2)$ with leaky ReLU activation $\phi(z) = \max(c' \cdot z, z)$ where $c' > 0$, exponential loss $\mathcal{L}$, and dataset $S$ that is $\gamma$-orthogonal separable or $\gamma$-nearly orthogonal then we have the following:

\begin{enumerate}
    \item If the initialization satisfies $\Vert W_1[:, j]^{(0)} \Vert \le \sqrt{2} \cdot \Vert w_{2}[j]^{(0)} \Vert$ then training with sign feedback alignment satisfies alignment dominance.
    \item If additionally, we have $w_{2}[j] = b_{2}[j] = 1$ then feedback alignment satisfies alignment dominance.
\end{enumerate}

\end{restatable}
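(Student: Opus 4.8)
The plan is to establish alignment dominance by factoring the inner product $\langle \nabla \mathcal{L}(\theta), \tilde\nabla \mathcal{L}(\theta) \rangle$ and lower-bounding it by a power of the suboptimality gap. Since $\mathcal{L}$ is the exponential loss, $\mathcal{L}^* = 0$, so the gap is simply $\mathcal{L}(\theta)$ itself. For a two-layer network the only feedback matrix is $b_2$ (the output layer), so the FA update differs from the true gradient only through the first-layer contribution $\tilde\delta_1 = (\delta_2 \cdot b_2) \circ \phi'(h_1)$ versus $\delta_1 = (\delta_2 \cdot w_2) \circ \phi'(h_1)$. First I would write out both $\nabla_{W_1}\mathcal{L}, \nabla_{w_2}\mathcal{L}$ and their FA counterparts explicitly, noting that the output-layer update $\tilde\nabla_{w_2}\mathcal{L} = \nabla_{w_2}\mathcal{L}$ is unchanged (feedback only affects the backward pass into earlier layers), so the inner product splits as a matching output-layer term plus a first-layer cross term.

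Next I would invoke the alignment guarantees proved earlier. Under case 1 (sign-FA with the stated norm initialization), Lemma~\ref{angle} together with Theorem~\ref{conservation_law} guarantees that the forward weights $w_2^{(t)}$ match the sign of their feedback weights throughout training, so $w_2^{(t)}[j] = b_2[j]\cdot|w_2^{(t)}[j]|$ and the cosine between $w_2$ and $b_2$ stays positive (bounded below by $1/\sqrt{n}$ via Proposition~\ref{sign_angle}). This positivity is what forces the first-layer cross term $\langle \delta_1 a_0^T, \tilde\delta_1 a_0^T\rangle$ to be nonnegative: each neuron's feedback-propagated error $\tilde\delta_1[j]$ has the same sign as the true $\delta_1[j]$ because they differ only by the positive factor relating $w_2[j]$ to $b_2[j]$, and $\phi'(h_1) \ge c' > 0$ for leaky ReLU. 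Under case 2 (plain FA with $w_2 = b_2 = 1$), Lemma~\ref{angle} again certifies $w_2^{(t)} > 0$ and positive alignment, giving the same sign-agreement conclusion.

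With positivity secured, the remaining work is the norm lower bound: I would show $\langle \nabla\mathcal{L}, \tilde\nabla\mathcal{L}\rangle \ge \alpha\, \mathcal{L}(\theta)^\beta$ for suitable $\alpha,\beta$. Here the orthogonal-separability assumption does the heavy lifting. On a $\gamma$-orthogonal separable (or $(\gamma,\epsilon)$-nearly orthogonal) dataset, the data is linearly separable with a large margin, so one can lower-bound the gradient norm in terms of the loss in the style of a Polyak--\L ojasiewicz inequality. Concretely, the output-layer gradient term is $\sum_i \ell'(f(x_i))\, a_1(x_i)$, and using $\ell'(z) = -e^{-z}$ for the exponential loss together with the margin/orthogonality bound on $\langle x_i y_i, x_j y_j\rangle \ge \gamma$, I would bound the aggregate negative inner product below by (a constant times) $\mathcal{L}(\theta)$ or $\mathcal{L}(\theta)^2$, yielding alignment dominance with $\beta = 1$ or $\beta = 2$.

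The main obstacle I anticipate is the norm lower bound rather than the sign/positivity argument: controlling $\langle \nabla\mathcal{L}, \tilde\nabla\mathcal{L}\rangle$ from below requires ruling out near-cancellation among the per-example contributions $\ell'(f(x_i))\, a_1(x_i)$, and the cross terms $\langle x_i y_i, x_j y_j\rangle$ for $i \neq j$ are only controlled in magnitude (by $\gamma$ under near-orthogonality), not in sign. The orthogonal-separability definition guarantees these cross terms are bounded below by $\gamma > 0$, so same-sign alignment of the data is the mechanism that prevents cancellation; I would exploit exactly this to keep the diagonal (self) terms dominant. A secondary subtlety is handling the leaky-ReLU derivative across region boundaries, but since $\phi'(h_1) \in \{c', 1\}$ is uniformly bounded below by $c' > 0$ almost everywhere, this only costs a constant factor in $\alpha$ and does not affect $\beta$.
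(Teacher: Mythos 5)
Your overall architecture matches the paper's — split $\langle \nabla \mathcal{L}, \tilde\nabla \mathcal{L}\rangle$ into the output-layer term (where FA and the gradient coincide) plus the first-layer cross term, use Theorem~\ref{conservation_law}, Lemma~\ref{angle} and Proposition~\ref{sign_angle} for sign agreement between $w_2$ and $b_2$, and use the data assumption to prevent cancellation — but you attach the two halves of the argument to the wrong terms, and as written the plan fails at both attachment points. First, sign agreement alone does \emph{not} make the first-layer cross term nonnegative. Writing it out,
\[
\langle \nabla_{W_1}\mathcal{L}, \tilde\nabla_{W_1}\mathcal{L}\rangle
= \mathbb{E}_{i,j}\left[ \ell'_i \ell'_j \left( w_2^T T_{ij} b_2 \right) \langle x_i y_i , x_j y_j \rangle \right],
\qquad T_{ij} := \mathrm{diag}[\phi'(x_i W_1)]\,\mathrm{diag}[\phi'(x_j W_1)],
\]
sign agreement only controls the scalar factor $w_2^T T_{ij} b_2 \ge c > 0$; the cross-example factor $\langle x_i y_i, x_j y_j\rangle$ for $i \neq j$ can be negative for general data, so this double sum is not nonnegative "for free" — this is exactly where the orthogonality assumption must be spent. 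Your per-neuron argument ($\tilde\delta_1[j]$ has the same sign as $\delta_1[j]$) is valid only for the diagonal $i=j$ terms.

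Second, you propose to extract the quantitative bound $\alpha\,\mathcal{L}^\beta$ from the output-layer term, whose per-example contributions are $\ell'_i y_i\, a_1(x_i)$ with $a_1(x_i) = \phi(x_i W_1)$. But the orthogonality assumptions concern raw inputs and do not transfer to leaky-ReLU features: e.g.\ with a width-one hidden layer, $x_i = (1, 0.5)$, $x_j = (1, -0.5)$, $y_i = y_j = 1$, $W_1 = (0,1)^T$ gives $\langle x_i y_i, x_j y_j\rangle = 0.75 > 0$ yet $\phi(x_i W_1)\,\phi(x_j W_1) = -0.25\,c' < 0$, so no PL-type lower bound on $\Vert \nabla_{w_2}\mathcal{L}\Vert^2$ follows from the data assumption alone. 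The paper's proof does the opposite of your allocation: the output-layer term is simply dropped (it is a squared norm, hence $\ge 0$), and the entire quantitative bound is carried by the first-layer term, where the raw inner products $\langle x_i y_i, x_j y_j\rangle$ appear explicitly next to the positive scalar $w_2^T T_{ij} b_2$. There, orthogonal separability gives $\ge c\gamma\, \mathbb{E}_{i,j}[\ell_i \ell_j] = c\gamma\,\mathcal{L}(\theta)^2$, resolving your hedge to $\beta = 2$; the nearly orthogonal case additionally needs the Cauchy--Schwarz step $n\sum_j \ell_j^2 \ge \left(\sum_j \ell_j\right)^2$ to absorb the sign-uncontrolled off-diagonal terms, leaving $\epsilon\,\mathcal{L}(\theta)^2$. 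All of your ingredients appear in the paper's proof, but they must be reattached to the correct terms for the argument to close.
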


This proposition combined with Theorem \ref{convergence} implies that two-layer networks indeed converge. 

\begin{hproof}
We consider a two-layer network with parameters $\theta = (W_1, w_2)$, and the learning objective defined by the exponential margin-loss. The goal is to establish $(\gamma,2)$-alignment dominance. This involves showing a strong correlation between the gradient and the feedback alignment for the first layer. To analyze correlation we derive expressions for the updates under the feedback alignment flow. This factors into an alignment component $\langle w_2, T b_2 \rangle$ modulated by a diagonal matrix and a data correlation component $\sum_{ij} \langle y_i x_i, y_j x_j \rangle$. We use Lemma \ref{angle} to analyze the alignment component. For the data correlation we show that for orthogonally separable or nearly orthogonal datasets, this component satisfies a lower bound.
\end{hproof}

\textbf{Comparisons and Limitations:} Compared with previous work on feedback alignment we are the first to study (sign) FA in \textit{non-linear} settings distinct from previous work focused on settings in the kernel regime \cite{song2021convergence, boopathy2022train}. Additionally, our result on alignment conservation stated in Theorem \ref{conservation} is general, it holds for all (leaku) ReLU networks trained with (sign) FA. 

Our analysis, while informative, does have limitations. While Theorem \ref{convergence_result} implies convergence generally, establishing it in particular settings requires careful analysis. In particular, Proposition \ref{application_two_layer} requires that the activation be (leaky) ReLU and the training data be linearly separable. On the other hand, extending convergence results to deep networks requires sophisticated analysis which is an active area of study in deep learning \cite{ji2020directional, lyu2021gradient}. In general, we expect $(\alpha,\beta)$-alignment is satisfied in many settings, such as the kernel regime \cite{jacot2018neural}, which we do not explore in this work. Exploring these settings is an important direction for future study. Finally, our theory only bounds alignment and does not directly explain why the alignment can increase over time. We think this is a challenging direction. Our experiments in the next section show alignment can exhibit non-monotonic behavior and switch from non-aligned to aligned.

\section{Empirical Analysis of Alignment and Multi-Class Performance}
\label{empirical}

In this section we empirically evaluate feedback alignment (FA) mechanisms in neural networks and their performance in multi-class classification tasks. We test two hypotheses:

\begin{enumerate}
    \item The conservation of learning dynamics as per Theorem \ref{conservation_law}, Lemma \ref{angle}, and Lemma \ref{sign_angle} hold under practical training conditions. 
    \item Alignment with the true gradient enhances multi-class classification performance.
\end{enumerate}

Our theory makes direct contact with experiments, confirming key quantitative predictions and consistency with previous results. We also analyze datasets exhibiting benign overfitting, where our theory suggests FA methods may have similar implicit regularization. Our experiments provide initial evidence for this unexplained phenomena in FA. These results explain alignment and discover potential benign overfitting in FA, inspiring future work on the underlying mechanisms.

We assess the first hypothesis by measuring the deviation from the conservation law (\ref{con_eq}). The second hypothesis is evaluated using classification accuracy on validation sets. Our experiments involve widely-used datasets -- MNIST, CIFAR-100, TinyImageNet \cite{lecun1998mnist, krizhevsky2009learning, le2015tiny}. We use a pre-existing python package for FA implementations \cite{sanfiz2021benchmarking}. We compare full-batch gradient descent, signFA, FA, and adaFA. Training is conducted on two A100 GPUs. Refer to Appendix \ref{more_details} for detailed experimental settings.

\subsection{Two-Layer Fully-Connected Network}
\label{mnist}

We investigate two-layer leaky ReLU networks with widths ranging from 15 to 200, trained on a noisy MNIST subset with 20\% training label noise. The test set is clean. Our protocol involves a 6,000-epoch training schedule with adaptive learning rates. Experimental iterations are repeated multiple times so results are reported with standard error bars. Further experimental details are in Appendix \ref{more_details}.

\textbf{Conservation:} The experimental results, as displayed in Figure \ref{fig:Conservation} are consistent with the predicted conservation law (Theorem \ref{conservation_law}), with larger deviations occurring in smaller width networks and for FA. This suggests initialization's has an impact on learning dynamics. See Appendix \ref{more_results} for a numerical analysis of the deviations. 

\textbf{Alignment Dynamics:} Figure \ref{fig:Alignment_Evolution} shows consistent improvement in alignment over epochs, with adaFA maintaining higher alignment than FA, indicating the significant role of initialization. Sign-FA exhibits complex, non-monotonic alignment behavior, yet it consistently exceeds our theoretical lower bounds, affirming Lemma \ref{sign_angle}. 

\textbf{Generalization:} Training and test losses are reported in Figure \ref{fig:mnist_double_descent}. Across methods and network widths, higher parameterization and overfitting does not necessarily impair generalization, highlighting FA's implicit regularization properties.

\begin{figure}[ht]
\centering
\includegraphics[width=1.0\linewidth]{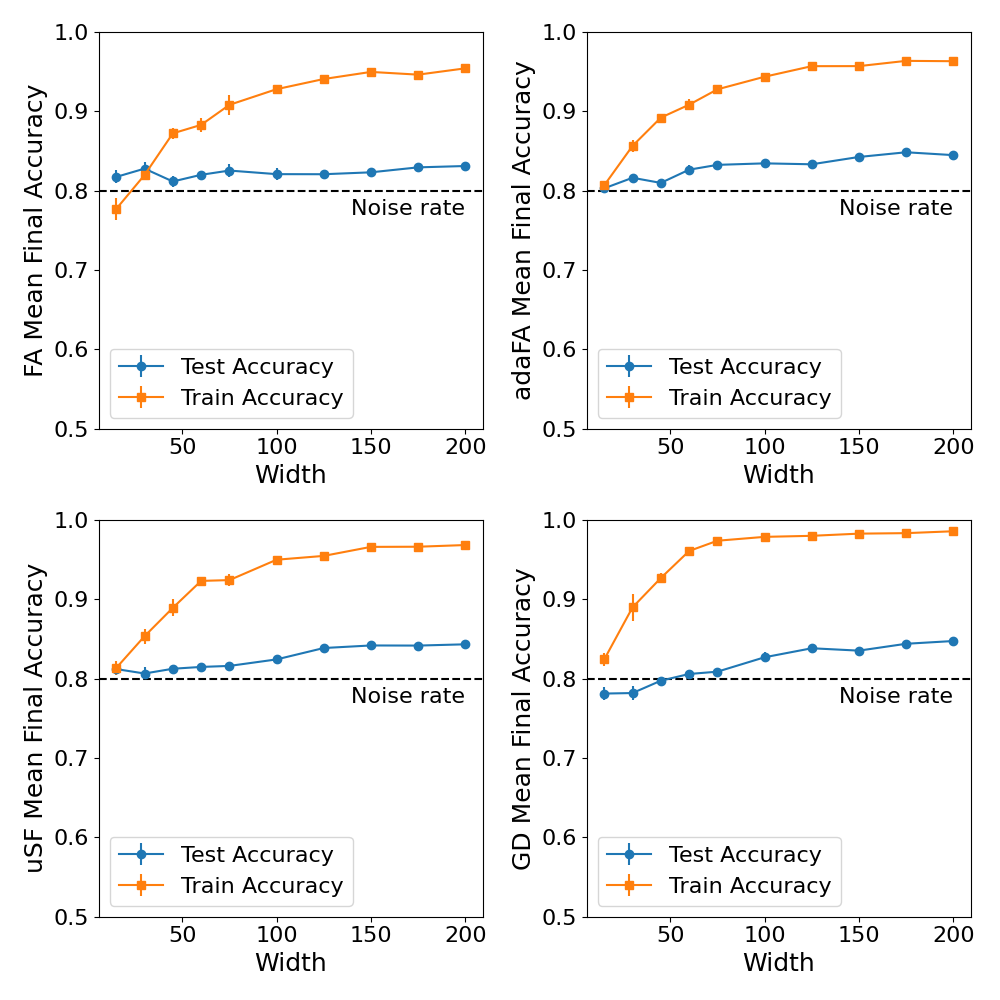}
\caption{\textbf{Benign Overfitting:} We plot the test-loss curve as a function of width for networks trained on noisy MNIST with 20\% label noise. Despite fitting the noise all methods are able to generalize.}
\label{fig:mnist_double_descent}
\end{figure}

\subsection{Deep Convolutional Networks}

We extend our analysis to deeper networks -- LeNet on CIFAR-100 and ResNet-18 on Tiny-ImageNet \cite{lecun1998gradient, he2016deep}. We randomly sub-sample $n$-class subsets from these datasets for $n \in [10,25,50]$ to explore the effect of classification difficulty on performance. For each dataset, we tune hyper-parameters on a random $10$-class and report results on a fresh sample. The test set for Tiny-ImageNet is unlabeled so we use the validation set as the test set. For more discussion of implementation details, see Appendix \ref{more_details}.

\textbf{Effect of Alignment Strategies:} We report mean test accuracy with standard errors in Tables \ref{initialization_effect} and \ref{image_effect}. The choice of FA method plays an important role in network performance. Initialization impacts performance, particularly in simpler classification tasks. This is evident from the results on CIFAR-100. However, as class complexity increases, the relative advantage of sophisticated initialization strategies appears to diminish. While performance generally declines with an increasing number of classes, sign-FA demonstrates more robust performance than adaFA, maintaining competitiveness with traditional backpropagation. This suggests alignment is more important than initialization for effective multi-class performance. Further analysis, of layer-wise alignment for the LeNet trained on CIFAR-100 is available in Appendix \ref{deep_alignment}.

\begin{table}[t]
\caption{Test-Accuracy on $n$-class Subsets of CIFAR-100}
\label{initialization_effect}
\vskip 0.15in
\begin{center}
\begin{scriptsize}
\begin{sc}
\begin{tabular}{|l|c|c|c|}
\hline
Method & 10 Classes & 25 Classes &  50 Classes \\ \hline
FA & $55.2\% \pm 3.8\%$ & $39.7 \% \pm 0.5\%$ & $32.6 \% \pm 0.6\%$ \\
adaFA & $61.6\% \pm 0.4\%$ & $42.9\% \pm 1.2\%$ & $34.1 \% \pm 0.5\%$ \\
signFA & $74.1\% \pm 0.1\%$ & $58.9\% \pm 1.2\%$  & $48.9 \% \pm 0.3\%$\\
GD & $75.8\% \pm 0.2\%$ & $60.1\% \pm 1.2\%$ & $50.6 \% \pm 0.4\%$ \\
\hline
\end{tabular}
\end{sc}
\end{scriptsize}
\end{center}
\vskip -0.1in
\end{table}

\begin{table}[t]
\caption{Test-Accuracy on $n$-class Subsets of Tiny-ImageNet}
\label{image_effect}
\vskip 0.15in
\begin{center}
\begin{scriptsize}
\begin{sc}
\begin{tabular}{|l|c|c|c|}
\hline
Method & 10 Classes & 25 Classes &  50 Classes \\ \hline
FA & $49.6\% \pm 1.7\%$ & $40.1 \% \pm 1.6\%$ & $30.9 \% \pm 0.8\%$ \\
adaFA & $51.6\% \pm 2.2\%$ & $35.3\% \pm 1.0\%$ & $31.0 \% \pm 0.9\%$ \\
signFA & $64.9\% \pm 4.8\%$ & $53.0\% \pm 2.6\%$  & $47.7 \% \pm 0.6\%$\\
GD & $67.5\% \pm 0.7\%$ & $55.9\% \pm 0.6\%$ & $53.4 \% \pm 2.3\%$ \\
\hline
\end{tabular}
\end{sc}
\end{scriptsize}
\end{center}
\vskip -0.1in
\end{table}

\section{Conclusion and Future Work}
\label{conclusion}

In this work, we introduced a novel theoretical framework to analyze Feedback Alignment (FA) methods, offering a deeper understanding that we applied to explain the success of aligned FA methods such as adaFA and sign-FA. Our exploration revealed a conservation law that shows an inherent regularization in FA towards maintaining alignment between forward and backward weights. Looking ahead, this research paves the way for several key areas of exploration. Assessing FA's robustness to distribution-shift could be a valuable direction for future work. Perhaps, the feedback weights can be interpreted as a prior affording better generalization than under gradient descent. Additionally, the integration of FA with diverse learning paradigms could unveil more efficient and powerful algorithms. The sign-FA method could be seen as a one-bit approximation of the weights for the backward pass which may indicate applications to differential-privacy. In summary, there is a compelling opportunity for interdisciplinary collaboration with neuroscientists to explore its relevance in understanding biological neural networks.

\section*{Impact Statement} This paper presents work with the goal to advance the field of bio-plausible learning. There are many potential societal consequences of more efficient deep learning, none that need to be specifically highlighted here.

\section*{Acknowledgement} We are grateful to Jeremy Cohen for valuable early discussions and feedback on an initial draft of this paper. This research was funded in part by the Accelerate Foundation Models Research program at Microsoft. Zachary Robertson is supported by a Stanford School of Engineering fellowship.

\bibliography{references}

\begin{thebibliography}{44}
\providecommand{\natexlab}[1]{#1}
\providecommand{\url}[1]{\texttt{#1}}
\expandafter\ifx\csname urlstyle\endcsname\relax
  \providecommand{\doi}[1]{doi: #1}\else
  \providecommand{\doi}{doi: \begingroup \urlstyle{rm}\Url}\fi

\bibitem[Bartunov et~al.(2018)Bartunov, Santoro, Richards, Marris, Hinton, and Lillicrap]{bartunov2018assessing}
Bartunov, S., Santoro, A., Richards, B., Marris, L., Hinton, G.~E., and Lillicrap, T.
\newblock Assessing the scalability of biologically-motivated deep learning algorithms and architectures.
\newblock \emph{Advances in neural information processing systems}, 31, 2018.

\bibitem[Bengio(2014)]{bengio2014auto}
Bengio, Y.
\newblock How auto-encoders could provide credit assignment in deep networks via target propagation.
\newblock \emph{arXiv preprint arXiv:1407.7906}, 2014.

\bibitem[Bernstein et~al.(2018)Bernstein, Wang, Azizzadenesheli, and Anandkumar]{bernstein2018signsgd}
Bernstein, J., Wang, Y.-X., Azizzadenesheli, K., and Anandkumar, A.
\newblock signsgd: Compressed optimisation for non-convex problems.
\newblock In \emph{International Conference on Machine Learning}, pp.\  560--569. PMLR, 2018.

\bibitem[Boopathy \& Fiete(2022)Boopathy and Fiete]{boopathy2022train}
Boopathy, A. and Fiete, I.
\newblock How to train your wide neural network without backprop: An input-weight alignment perspective.
\newblock In \emph{International Conference on Machine Learning}, pp.\  2178--2205. PMLR, 2022.

\bibitem[Boursier et~al.(2022)Boursier, Pillaud-Vivien, and Flammarion]{boursier2022gradient}
Boursier, E., Pillaud-Vivien, L., and Flammarion, N.
\newblock Gradient flow dynamics of shallow relu networks for square loss and orthogonal inputs.
\newblock \emph{Advances in Neural Information Processing Systems}, 35:\penalty0 20105--20118, 2022.

\bibitem[Cornford et~al.(2021)Cornford, Kalajdzievski, Leite, Lamarquette, Kullmann, and Richards]{cornford2021learning}
Cornford, J., Kalajdzievski, D., Leite, M., Lamarquette, A., Kullmann, D.~M., and Richards, B.~A.
\newblock Learning to live with dale's principle: {\{}ANN{\}}s with separate excitatory and inhibitory units.
\newblock In \emph{International Conference on Learning Representations}, 2021.
\newblock URL \url{https://openreview.net/forum?id=eU776ZYxEpz}.

\bibitem[Crick(1989)]{crick1989recent}
Crick, F.
\newblock The recent excitement about neural networks.
\newblock \emph{Nature}, 337\penalty0 (6203):\penalty0 129--132, 1989.

\bibitem[Du et~al.(2018)Du, Hu, and Lee]{du2018algorithmic}
Du, S.~S., Hu, W., and Lee, J.~D.
\newblock Algorithmic regularization in learning deep homogeneous models: Layers are automatically balanced.
\newblock \emph{Advances in neural information processing systems}, 31, 2018.

\bibitem[Eccles(1976)]{eccles1976electrical}
Eccles, J.~C.
\newblock From electrical to chemical transmission in the central nervous system: the closing address of the sir henry dale centennial symposium cambridge, 19 september 1975.
\newblock \emph{Notes and records of the Royal Society of London}, 30\penalty0 (2):\penalty0 219--230, 1976.

\bibitem[Elman(1996)]{elman1996rethinking}
Elman, J.~L.
\newblock \emph{Rethinking innateness: A connectionist perspective on development}, volume~10.
\newblock MIT press, 1996.

\bibitem[Ernoult et~al.(2022)Ernoult, Normandin, Moudgil, Spinney, Belilovsky, Rish, Richards, and Bengio]{ernoult2022towards}
Ernoult, M.~M., Normandin, F., Moudgil, A., Spinney, S., Belilovsky, E., Rish, I., Richards, B., and Bengio, Y.
\newblock Towards scaling difference target propagation by learning backprop targets.
\newblock In \emph{International Conference on Machine Learning}, pp.\  5968--5987. PMLR, 2022.

\bibitem[Frei \& Gu(2021)Frei and Gu]{frei2021proxy}
Frei, S. and Gu, Q.
\newblock Proxy convexity: A unified framework for the analysis of neural networks trained by gradient descent.
\newblock \emph{Advances in Neural Information Processing Systems}, 34:\penalty0 7937--7949, 2021.

\bibitem[Frei et~al.(2022)Frei, Vardi, Bartlett, Srebro, and Hu]{frei2022implicit}
Frei, S., Vardi, G., Bartlett, P.~L., Srebro, N., and Hu, W.
\newblock Implicit bias in leaky relu networks trained on high-dimensional data.
\newblock \emph{arXiv preprint arXiv:2210.07082}, 2022.

\bibitem[Grossberg(1987)]{grossberg1987competitive}
Grossberg, S.
\newblock Competitive learning: From interactive activation to adaptive resonance.
\newblock \emph{Cognitive science}, 11\penalty0 (1):\penalty0 23--63, 1987.

\bibitem[He et~al.(2016)He, Zhang, Ren, and Sun]{he2016deep}
He, K., Zhang, X., Ren, S., and Sun, J.
\newblock Deep residual learning for image recognition.
\newblock In \emph{Proceedings of the IEEE conference on computer vision and pattern recognition}, pp.\  770--778, 2016.

\bibitem[Jacot et~al.(2018)Jacot, Gabriel, and Hongler]{jacot2018neural}
Jacot, A., Gabriel, F., and Hongler, C.
\newblock Neural tangent kernel: Convergence and generalization in neural networks.
\newblock \emph{Advances in neural information processing systems}, 31, 2018.

\bibitem[Ji \& Telgarsky(2020)Ji and Telgarsky]{ji2020directional}
Ji, Z. and Telgarsky, M.
\newblock Directional convergence and alignment in deep learning.
\newblock \emph{Advances in Neural Information Processing Systems}, 33:\penalty0 17176--17186, 2020.

\bibitem[Jung et~al.(2023)Jung, Baek, Kim, and Chung]{jung2023lafd}
Jung, K., Baek, I., Kim, S., and Chung, Y.~D.
\newblock Lafd: Local-differentially private and asynchronous federated learning with direct feedback alignment.
\newblock \emph{IEEE Access}, 2023.

\bibitem[Krizhevsky et~al.(2009)Krizhevsky, Hinton, et~al.]{krizhevsky2009learning}
Krizhevsky, A., Hinton, G., et~al.
\newblock Learning multiple layers of features from tiny images.
\newblock 2009.

\bibitem[Kunin et~al.(2020)Kunin, Nayebi, Sagastuy-Brena, Ganguli, Bloom, and Yamins]{kunin2020two}
Kunin, D., Nayebi, A., Sagastuy-Brena, J., Ganguli, S., Bloom, J., and Yamins, D.
\newblock Two routes to scalable credit assignment without weight symmetry.
\newblock In \emph{International Conference on Machine Learning}, pp.\  5511--5521. PMLR, 2020.

\bibitem[Launay et~al.(2020)Launay, Poli, Boniface, and Krzakala]{launay2020direct}
Launay, J., Poli, I., Boniface, F., and Krzakala, F.
\newblock Direct feedback alignment scales to modern deep learning tasks and architectures.
\newblock \emph{Advances in neural information processing systems}, 33:\penalty0 9346--9360, 2020.

\bibitem[Le \& Yang(2015)Le and Yang]{le2015tiny}
Le, Y. and Yang, X.
\newblock Tiny imagenet visual recognition challenge.
\newblock \emph{CS 231N}, 7\penalty0 (7):\penalty0 3, 2015.

\bibitem[Lechner(2020)]{lechner2020learning}
Lechner, M.
\newblock Learning representations for binary-classification without backpropagation.
\newblock In \emph{8th International Conference on Learning Representations}, 2020.

\bibitem[LeCun(1998)]{lecun1998mnist}
LeCun, Y.
\newblock The mnist database of handwritten digits.
\newblock \emph{http://yann. lecun. com/exdb/mnist/}, 1998.

\bibitem[LeCun et~al.(1998)LeCun, Bottou, Bengio, and Haffner]{lecun1998gradient}
LeCun, Y., Bottou, L., Bengio, Y., and Haffner, P.
\newblock Gradient-based learning applied to document recognition.
\newblock \emph{Proceedings of the IEEE}, 86\penalty0 (11):\penalty0 2278--2324, 1998.

\bibitem[Lee et~al.(2015)Lee, Zhang, Fischer, and Bengio]{lee2015difference}
Lee, D.-H., Zhang, S., Fischer, A., and Bengio, Y.
\newblock Difference target propagation.
\newblock In \emph{Machine Learning and Knowledge Discovery in Databases: European Conference, ECML PKDD 2015, Porto, Portugal, September 7-11, 2015, Proceedings, Part I 15}, pp.\  498--515. Springer, 2015.

\bibitem[Lee \& Kifer(2020)Lee and Kifer]{lee2020differentially}
Lee, J. and Kifer, D.
\newblock Differentially private deep learning with direct feedback alignment.
\newblock \emph{arXiv preprint arXiv:2010.03701}, 2020.

\bibitem[Liao et~al.(2016)Liao, Leibo, and Poggio]{liao2016important}
Liao, Q., Leibo, J., and Poggio, T.
\newblock How important is weight symmetry in backpropagation?
\newblock In \emph{Proceedings of the AAAI Conference on Artificial Intelligence}, volume~30, 2016.

\bibitem[Lillicrap et~al.(2016)Lillicrap, Cownden, Tweed, and Akerman]{lillicrap2016random}
Lillicrap, T.~P., Cownden, D., Tweed, D.~B., and Akerman, C.~J.
\newblock Random synaptic feedback weights support error backpropagation for deep learning.
\newblock \emph{Nature communications}, 7\penalty0 (1):\penalty0 13276, 2016.

\bibitem[Lyu et~al.(2021)Lyu, Li, Wang, and Arora]{lyu2021gradient}
Lyu, K., Li, Z., Wang, R., and Arora, S.
\newblock Gradient descent on two-layer nets: Margin maximization and simplicity bias.
\newblock \emph{Advances in Neural Information Processing Systems}, 34:\penalty0 12978--12991, 2021.

\bibitem[McCulloch \& Pitts(1943)McCulloch and Pitts]{mcculloch1943logical}
McCulloch, W.~S. and Pitts, W.
\newblock A logical calculus of the ideas immanent in nervous activity.
\newblock \emph{The bulletin of mathematical biophysics}, 5:\penalty0 115--133, 1943.

\bibitem[Medler(1998)]{medler1998brief}
Medler, D.~A.
\newblock A brief history of connectionism.
\newblock \emph{Neural computing surveys}, 1:\penalty0 18--72, 1998.

\bibitem[Meulemans et~al.(2020)Meulemans, Carzaniga, Suykens, Sacramento, and Grewe]{meulemans2020theoretical}
Meulemans, A., Carzaniga, F., Suykens, J., Sacramento, J., and Grewe, B.~F.
\newblock A theoretical framework for target propagation.
\newblock \emph{Advances in Neural Information Processing Systems}, 33:\penalty0 20024--20036, 2020.

\bibitem[Moskovitz et~al.(2018)Moskovitz, Litwin-Kumar, and Abbott]{moskovitz2018feedback}
Moskovitz, T.~H., Litwin-Kumar, A., and Abbott, L.
\newblock Feedback alignment in deep convolutional networks.
\newblock \emph{arXiv preprint arXiv:1812.06488}, 2018.

\bibitem[N{\o}kland(2016)]{nokland2016direct}
N{\o}kland, A.
\newblock Direct feedback alignment provides learning in deep neural networks.
\newblock \emph{Advances in neural information processing systems}, 29, 2016.

\bibitem[Peters \& Kriegeskorte(2021)Peters and Kriegeskorte]{peters2021capturing}
Peters, B. and Kriegeskorte, N.
\newblock Capturing the objects of vision with neural networks.
\newblock \emph{Nature human behaviour}, 5\penalty0 (9):\penalty0 1127--1144, 2021.

\bibitem[Phuong \& Lampert(2020)Phuong and Lampert]{phuong2020inductive}
Phuong, M. and Lampert, C.~H.
\newblock The inductive bias of relu networks on orthogonally separable data.
\newblock In \emph{International Conference on Learning Representations}, 2020.

\bibitem[Refinetti et~al.(2021)Refinetti, d’Ascoli, Ohana, and Goldt]{refinetti2021align}
Refinetti, M., d’Ascoli, S., Ohana, R., and Goldt, S.
\newblock Align, then memorise: the dynamics of learning with feedback alignment.
\newblock In \emph{International Conference on Machine Learning}, pp.\  8925--8935. PMLR, 2021.

\bibitem[Richards et~al.(2019)Richards, Lillicrap, Beaudoin, Bengio, Bogacz, Christensen, Clopath, Costa, de~Berker, Ganguli, et~al.]{richards2019deep}
Richards, B.~A., Lillicrap, T.~P., Beaudoin, P., Bengio, Y., Bogacz, R., Christensen, A., Clopath, C., Costa, R.~P., de~Berker, A., Ganguli, S., et~al.
\newblock A deep learning framework for neuroscience.
\newblock \emph{Nature neuroscience}, 22\penalty0 (11):\penalty0 1761--1770, 2019.

\bibitem[Sanfiz \& Akrout(2021)Sanfiz and Akrout]{sanfiz2021benchmarking}
Sanfiz, A.~J. and Akrout, M.
\newblock Benchmarking the accuracy and robustness of feedback alignment algorithms, 2021.

\bibitem[Song et~al.(2021)Song, Xu, and Lafferty]{song2021convergence}
Song, G., Xu, R., and Lafferty, J.
\newblock Convergence and alignment of gradient descent with random backpropagation weights.
\newblock \emph{Advances in Neural Information Processing Systems}, 34:\penalty0 19888--19898, 2021.

\bibitem[Wang \& Joshi(2021)Wang and Joshi]{wang2021cooperative}
Wang, J. and Joshi, G.
\newblock Cooperative sgd: A unified framework for the design and analysis of local-update sgd algorithms.
\newblock \emph{The Journal of Machine Learning Research}, 22\penalty0 (1):\penalty0 9709--9758, 2021.

\bibitem[Yamins \& DiCarlo(2016)Yamins and DiCarlo]{yamins2016using}
Yamins, D.~L. and DiCarlo, J.~J.
\newblock Using goal-driven deep learning models to understand sensory cortex.
\newblock \emph{Nature neuroscience}, 19\penalty0 (3):\penalty0 356--365, 2016.

\bibitem[Yildirim et~al.(2019)Yildirim, Wu, Kanwisher, and Tenenbaum]{yildirim2019integrative}
Yildirim, I., Wu, J., Kanwisher, N., and Tenenbaum, J.
\newblock An integrative computational architecture for object-driven cortex.
\newblock \emph{Current opinion in neurobiology}, 55:\penalty0 73--81, 2019.

\end{thebibliography}
\bibliographystyle{icml2024}

\newpage
\appendix
\onecolumn

\section{Implementation Details}
\label{more_details}

All experiments are run on two A100 GPUs. We run all experiments multiple times. We compare the following methods: FA, adaFA, signFA, and GD. 

\textbf{Noisy MNIST Experiments:} The dataset for each training run consists of a random 4k subset of the MNIST dataset, with 20\% label noise. We train two-layer neural network architectures with leaky-ReLU activation. Hyper-parameter settings are the same for all methods and tuned to minimize the training loss while keeping low momentum to observe benign over fitting. Networks are trained on cross-entropy loss.  These networks have their weights initialized to $[-1/\sqrt{m} , 1/\sqrt{m}]$ where $m$ is the width of the network. We use full-batch training and train for 6k epochs. We have the initial learning rate set to 0.05, no weight decay, and a momentum of 0.05. The learning rate was scheduled to decrease by a factor of ten every 1,000 epochs. We track the following variables every 5 epochs: train loss, test loss, train accuracy, test accuracy, layer norms, and the dot-product between forward and feedback weights. We varied the number of hidden units across the range $[15,30,45,60,75,100,125,150,175,200]$. This procedure is repeated 6 times for each width setting and then we report the mean and standard error of the final train and test accuracy. 

\textbf{CIFAR-100 Experiments:} We train LeNet architectures on randomly selected $n$-class subsets of CIFAR-100 where $n \in [10,25,50]$. We normalize the mean and std of the data before it is feed into the network. Hyper-parameter settings are the same for all methods and tuned from the default settings of biotorch on a random $10$-class subset. We lowered the learning rate, increased the batch-size, and compensate by training for more epochs. Networks are trained on cross-entropy loss.  Xavier initialization is used for the network. We use a batch size of 1024, and train for 500 epochs. The initial learning rate is 0.01, weight decay is 0.0001, and momentum is 0.9. The learning rate was scheduled to decrease by half at the 100th and 250th epochs. We track the following variables every epoch: train loss, test loss, train accuracy, test accuracy, layer norms, and the dot-product between forward and feedback weights in each layer. This procedure is repeated six times for each method over the range $n \in [10,25,50]$ and then we report the mean and standard error of the final train and test accuracy. 

\textbf{Tiny-ImageNet Experiments:} We train ResNet-18 architectures on randomly selected $n$-class subsets of Tiny-ImageNet where $n \in [10,25,50]$. We normalize the mean and std of the data before it is feed into the network. Hyper-parameter settings are the same for all methods and tuned from the default settings of biotorch on a random $10$-class subset. We lowered the learning rate and decreased the number of epochs. Networks are trained on cross-entropy loss.  Xavier initialization is used for the network. We use a batch size of 512, and train for 50 epochs. The initial learning rate is 0.01, weight decay is 0.0001, and momentum is 0.9. The learning rate was scheduled to decrease by half at the 20th and 40th epochs. We track the following variables every epoch: train loss, test loss, train accuracy, test accuracy. This procedure is repeated four times for each method over the range $n \in [10,25,50]$ and then we report the mean and standard error of the final train and test accuracy.

\newpage

\section{Further Experiment Details and Results}
\label{more_results}

\subsection{Testing Conservation of Alignment}

Theorem \ref{conservation_law} predicts the following quantity is equal to one throughout training:

\[
\begin{aligned}
    \cfrac{\langle W_{i+1}^{(t)}[j,:], B_{i+1}^{(t)}[j,:] \rangle - \frac{1}{2} \Vert W_{i}^{(t)}[:, j] \Vert_F^2}{\langle W_{i+1}^{(0)}[j,:], B_{i+1}^{(0)}[j,:] \rangle -  \frac{1}{2} \Vert W_{i}^{(0)}[:, j] \Vert_F^2} = 1.
\end{aligned}
\]

We provide a numerical breakdown of the mean absolute deviations from one in Figure \ref{fig:Conservation} as a table. See Section \ref{mnist} for more details. Values are reported to the first significant decimal with standard errors over the four runs for each width.

\begin{table}[t]
\caption{Absolute Mean Deviation from Conservation Law ± SE by Width and Method}
\label{deviation_by_method_and_width}
\vskip 0.15in
\begin{center}
\begin{small}
\begin{sc}
\begin{tabular}{|l|c|c|c|c|}
\hline
Width & FA & adaFA & uSF & GD \\ \hline
15 & $1.5 \pm 0.6$ & $1.0 \pm 0.2$ & $0.4 \pm 0.1$ & $22.3 \pm 3.2$ \\
30 & $0.5 \pm 0.2$ & $0.1 \pm 0.02$ & $0.07 \pm 0.02$ & $4.8 \pm 0.3$ \\
45 & $0.2 \pm 0.1$ & $0.03 \pm 0.01$ & $0.02 \pm 0.005$ & $2.6 \pm 0.1$ \\
60 & $0.2 \pm 0.1$ & $0.01 \pm 0.002$ & $0.02 \pm 0.003$ & $1.4 \pm 0.03$ \\
75 & $0.2 \pm 0.2$ & $0.01 \pm 0.002$ & $0.009 \pm 0.003$ & $1.0 \pm 0.09$ \\
100 & $0.07 \pm 0.05$ & $0.006 \pm 0.002$ & $0.01 \pm 0.002$ & $0.6 \pm 0.04$ \\
125 & $0.01 \pm 0.002$ & $0.006 \pm 0.001$ & $0.005 \pm 0.001$ & $0.4 \pm 0.04$ \\
150 & $0.008 \pm 0.002$ & $0.004 \pm 0.001$ & $0.004 \pm 0.0004$ & $0.4 \pm 0.01$ \\
175 & $0.007 \pm 0.002$ & $0.005 \pm 0.0004$ & $0.003 \pm 0.0006$ & $0.3 \pm 0.008$ \\
200 & $0.01 \pm 0.002$ & $0.004 \pm 0.0004$ & $0.003 \pm 0.0002$ & $0.2 \pm 0.004$ \\
\hline
\end{tabular}
\end{sc}
\end{small}
\end{center}
\vskip -0.1in
\end{table}

\subsection{Analysis of Alignment in LeNet}
\label{deep_alignment}

A LeNet architecture consists of fully-connected and convolutional layers interleaved with max pooling operations. We analyze the alignment between the forward weights in trainable layers and their feedback weights. The ordering of trainable layers is as follows:

$$
C1 \to C2 \to C3 \to F1 \to F2
$$

where $C1,C2,C3$ are the convolutional layers and $F1, F2$ are the fully-connected layers.

We train the LeNet on $n$-class subsets of CIFAR-100 and display the alignment of each layer with it's corresponding feedback weights during training. For more implementation details see Appendix \ref{more_details}.

\begin{figure}[H]
\centering
\includegraphics[width=0.85\textwidth]{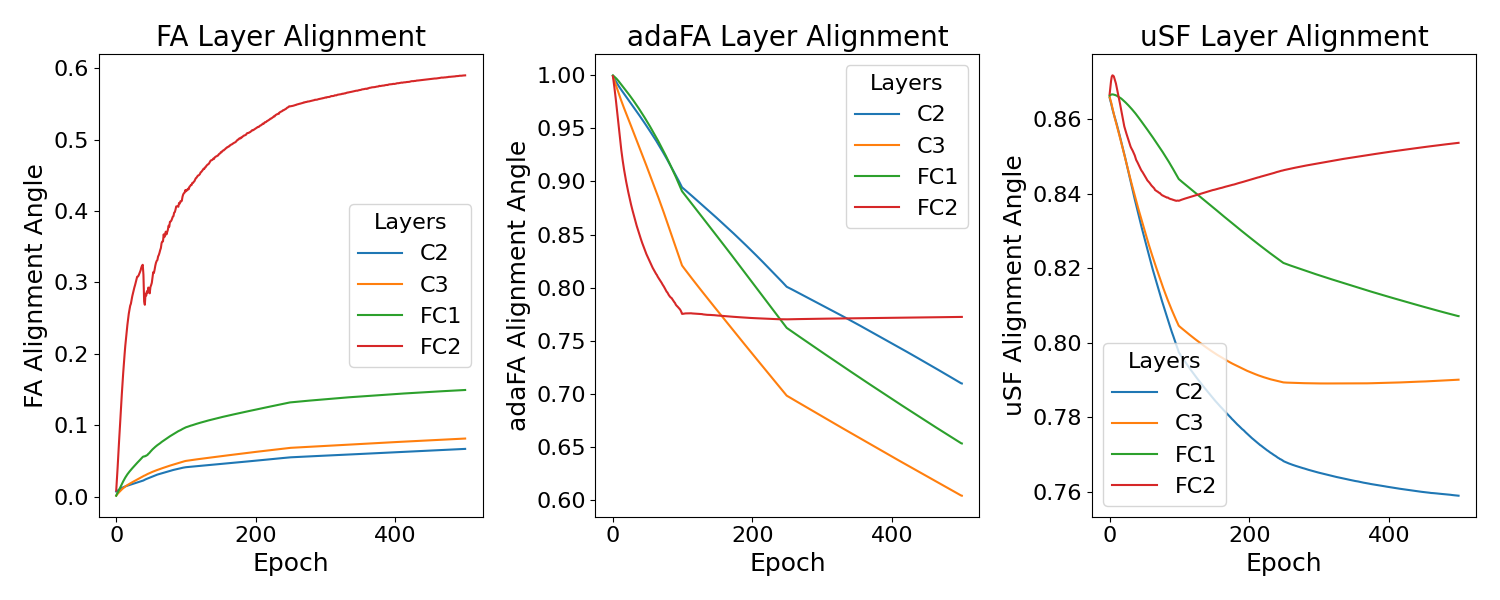}
\caption{\textbf{10-Class Alignment:} We plot the cosine of the angle between forward and backward weights during training on $10$-class subsets of CIFAR-100.}
\end{figure}

\begin{figure}[H]
\centering
\includegraphics[width=0.85\textwidth]{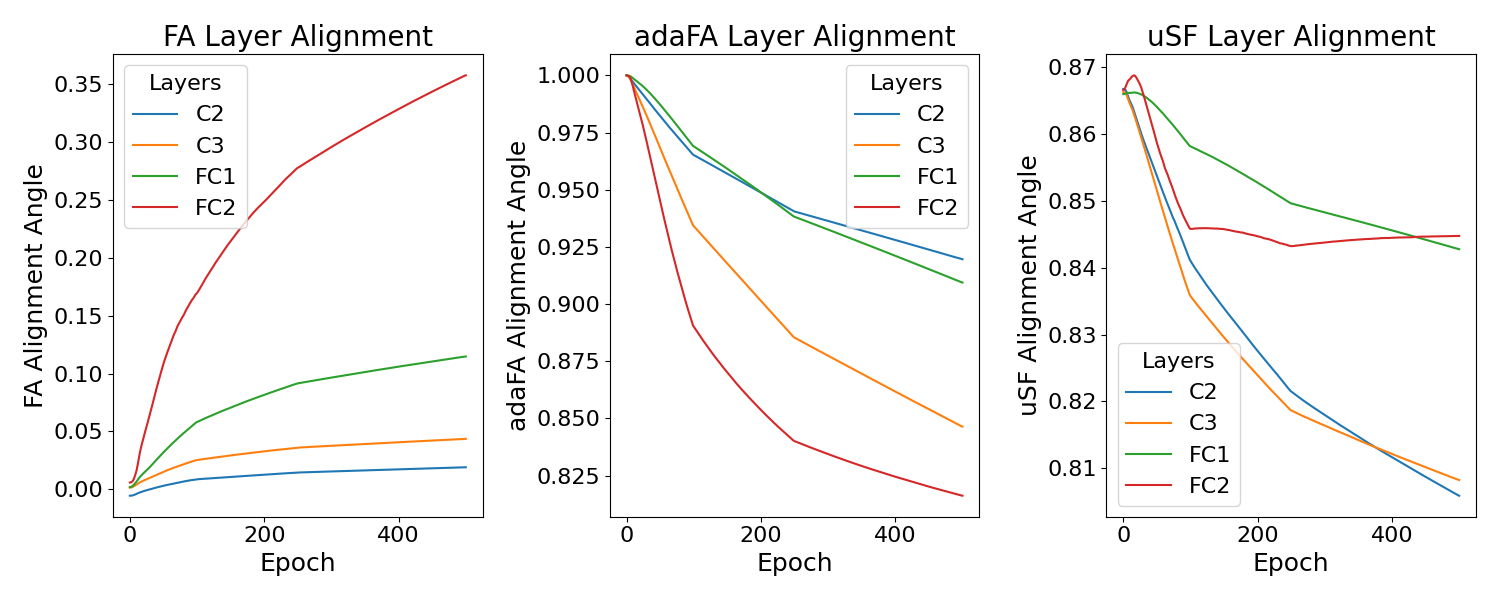}
\caption{\textbf{25-Class Alignment:} We plot the cosine of the angle between forward and backward weights during training on $25$-class subsets of CIFAR-100.}
\end{figure}

\begin{figure}[H]
\centering
\includegraphics[width=0.85\textwidth]{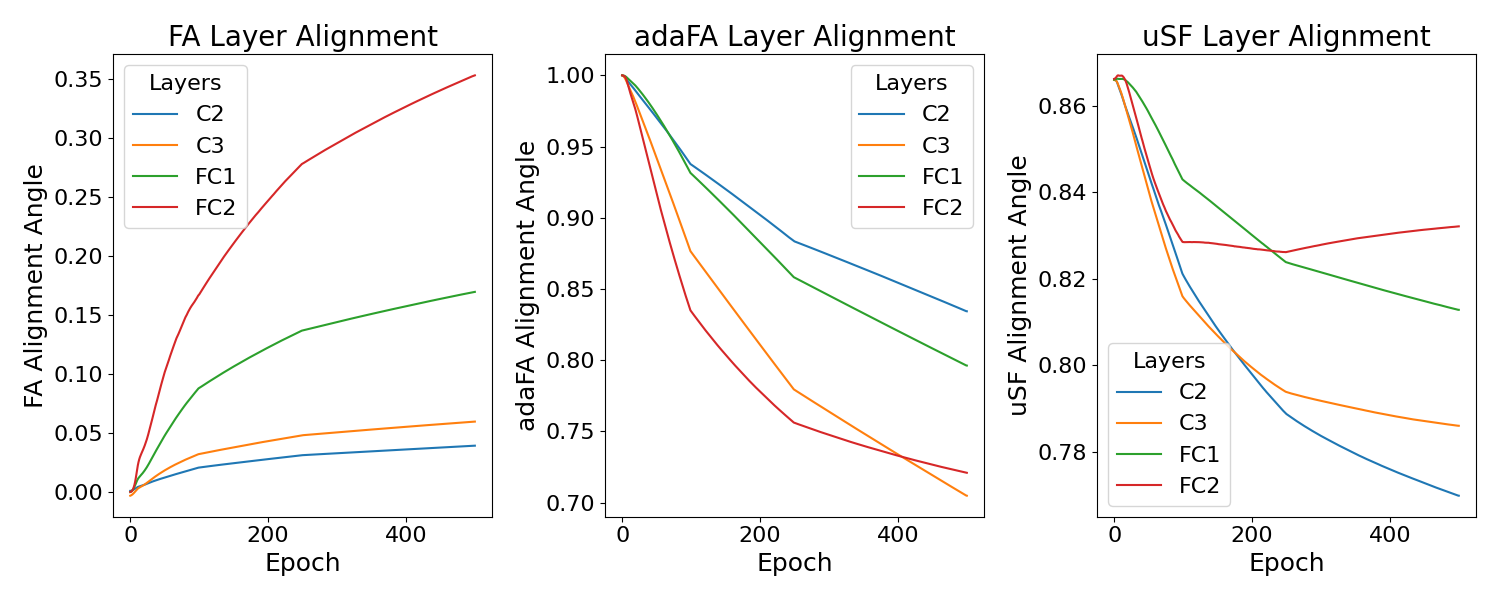}
\caption{\textbf{50-Class Alignment:} We plot the cosine of the angle between forward and backward weights during training on $50$-class subsets of CIFAR-100.}
\end{figure}

These figures show consistent improvement in alignment over epochs for FA, particularly the output layer. Moreover, adaFA maintains higher alignment than FA, indicating the significant role of initialization. Overall, these results are similar to our analysis with MNIST. We also find that alignment tends to degrade as the number of classes is increased. This effect is more pronounced for FA, but is also visible for adaFA and to a lesser extend sign-FA.

\newpage

\section{Omitted Proofs}
\label{proofs}

\subsection{Proof of theorem \protect\ref{conservation_law}}
\label{conservation}

\conservation*

\begin{proof}
For the moment assume that we want the dynamics on the interior of a differentiable region. We drop time indexing unless needed. We consider a differentiable loss such as $\ell_k : \mathbb{R}^m \times \mathbb{R}^m \to \mathbb{R}$ where we take $\delta_L(k) := \nabla_{f} \ell(f,y_k)$ as the gradient with respect to the output. In Section \ref{prelim} we review further notation.

Let $A_i$ be a diagonal matrix with activations of the $i$-th output layer on the diagonal,
$$
A_i = \text{diag} \circ \sigma' \circ W_i \circ  \sigma \circ \ldots \circ \sigma \circ W_1 (x)
$$
This matrix indicates if an output at the $i$-th layer is non-zero. Supressing composition we have,
$$
f_{W}(x) = x W_1 A_1 \ldots A_{L-1} W_L $$

First we calculate the gradient and then compare with the feedback alignment update rule. 

$$
\Rightarrow \nabla_{W_i} f_W(x) = (x W_1 A_1 \ldots W_{i-1} A_{i-1})^T (A_i W_{i+1} \ldots A_{L-1} W_L)^T
$$
Feedback aligment simplifies the backward pass by replacing terms with random feedback matrices. We have the following for the dynamics under feedback alignment:
\[
\dot W_{i} = -\eta \cdot \mathbb{E}_k \left[(x_k W_1 A_1 \ldots W_{i-1} A_{i-1})^T \delta_L(k)^T (A_i B_{i+1} \ldots A_{L-1} B_L)^T \right] .
\]
We let $W_i[:, j]$ equal $W_i$ on the $j^{\text{th}}$ row and zero otherwise. This forms weights that activate neuron $j$ in layer $i+1$. So we have,
\[
\begin{aligned}
\langle \dot W_i[:, j], W_i[:, j] \rangle &= \langle \dot W_i, W_i[j,:] \rangle = \text{Tr}(\dot W_i^T W_i[j,:]) \\
&= - \eta \cdot \mathbb{E}_k \left[ \text{Tr}((A_i B_{i+1} \ldots A_{L-1} B_L) \delta_L(k) (x_k W_1 A_1 \ldots W_{i-1} A_{i-1}) W_i[:, j]) \right] \\
&= - \eta \cdot \mathbb{E}_k \left[ \text{Tr}(x_k W_1 A_1 \ldots W_i[:, j] A_i B_{i+1} \ldots B_L \delta_L(k))\right]
\end{aligned}
\]
We are making use of the trace representation for the inner-product and the cyclic property of the trace map. Similarly, for the next layer we obtain,
\[
\begin{aligned}
\langle \dot W_{i+1}, B_{i+1}[j,:] \rangle &= \text{Tr}(\dot W_{i+1}^T B_{i+1}[j,:]) \\
&= - \eta \cdot \mathbb{E}_k \left[  \text{Tr}((A_{i+1} B_{i+2} \ldots A_{L-1} B_L) \delta_L(k) (x_k W_1 A_1 \ldots W_{i} A_{i}) B_{i+1}[j,:]) \right] \\
&= - \eta \cdot \mathbb{E}_k \left[ \text{Tr}(x_k W_1 A_1 \ldots W_i A_i B_{i+1}[j,:] \ldots B_L \delta_L(k))\right]
\end{aligned}
\]
Now we show that $W_i[:, j] A_i B_{i+1} = W_i A_i B_{i+1}[j,:]$. For the left-side notice that only neuron $j$ will be the only non-zero activated. Additionally, neuron $j$ will be activated and equal as though we didn't mask. Now consider the right side. We only consider the output from neuron $j$ which we just showed is the same as considering the left-side so we have equality. The major implication is that,
\[
\langle \dot W_i[:, j], W_i[:, j] \rangle = \langle \dot W_{i+1}[j,:], B_{i+1}[j,:] \rangle.
\]
The trace map is linear so after an integration by parts we have that,
\[
\Rightarrow \int_0^t \text{Tr}(\dot W_i^{(s)}[:, j] (W_i^{(s)}[:, j])^T) ds = \text{Tr} \left[ \int_0^t \dot W_i^{(s)} [:, j] (W_i^{(s)}[:, j])^T \right] ds
= \frac{1}{2} \Vert W_i^{(t)}[:, j] \Vert_F^2 -  \frac{1}{2} \Vert W_i^{(0)}[:, j] \Vert_F^2
\]
Finally,
\[
\begin{aligned}
\int_0^t \text{Tr}(\dot W_{i+1}^{(s)}[j,:] B_{i+1}^{(s)}[j,:]^T) ds = \text{Tr} \left[\int_0^t \dot W_{i+1}^{(s)}[j,:] B_{i+1}^{(s)}[j,:]^T ds \right] \\
 = \langle W_{i+1}^{(t)}[j,:], B_{i+1}^{(t)}[j,:] \rangle - \langle W_{i+1}^{(t)}[j,:], B_{i+1}^{(t)}[j,:] \rangle - \sum_{s \in I} \langle W_{i+1}^{(t)}[j,:], \Delta^{(s)}[j,:] \rangle \\
= \langle W_{i+1}^{(t)}[j,:], B_{i+1}^{(t)}[j,:] \rangle - \langle W_{i+1}^{(0)}[j,:], B_{i+1}^{(0)}[j,:] \rangle + 0
\end{aligned}
\]
where $\Delta(s)$ indicates entries of $W_{i+1}^{(s)}$ that are equal to zero and $I$ is the set of times this occurs. However, by definition, the corresponding inner-product is zero and the result follows. Note this sub-argument is uneeded for FA so we are done.
\end{proof}


\subsection{Proof of theorem \protect\ref{convergence_result}}
\label{convergence}

\convergence*

\begin{proof}

We handle each case separately.

\textbf{Case $\beta =1$:} With $(\alpha,\beta)$-alignment we can proceed directly and integrate the bound directly:
$$
\cfrac{d \mathcal{L}(\theta^{(t)})}{dt} \le -\alpha \cdot \mathcal{L}(\theta^{(t)}) $$
$$
\Rightarrow \int \cfrac{d \mathcal{L}}{\mathcal{L}} \le - \int_0^t \alpha \cdot ds $$
$$
\Rightarrow \log(\mathcal{L}(\theta^{(t)})) - \log(\mathcal{L}(\theta^{(0)})) \le - \alpha \cdot t .
$$
Isolating the terms we obtain our result:
$$
\mathcal{L}(\theta^{(t)}) \le \mathcal{L}(\theta^{(0)}) \cdot e^{-\alpha \cdot t}.
$$

\textbf{Case $\beta > 1$:} With $(\alpha,\beta)$-alignment we can proceed directly and integrate the bound directly:
$$
\cfrac{d \mathcal{L}(\theta^{(t)})}{dt} \le -\alpha \cdot \mathcal{L}(\theta^{(t)})^{\beta} $$
$$
\Rightarrow \int \cfrac{d \mathcal{L}}{\mathcal{L}^{\beta}} \le - \int_0^t \alpha \cdot ds $$
$$
\Rightarrow - \cfrac{\beta-1}{\mathcal{L}(\theta^{(t)})^{\beta-1}} + \cfrac{\beta-1}{\mathcal{L}(\theta^{(0)})^{\beta-1}} \le - \alpha \cdot t .
$$
Isolating the loss in the equality above we obtain our result:
$$
- \cfrac{\beta-1}{\mathcal{L}(\theta^{(t)})^{\beta-1}}  \le - \alpha \cdot t - \cfrac{\beta-1}{\mathcal{L}(\theta^{(0)})^{\beta-1}} $$
$$
\cfrac{\beta-1}{\alpha t + \cfrac{\beta-1}{\mathcal{L}(\theta^{(0)})^{\beta-1}}} \ge \mathcal{L}(\theta^{(t)})
$$
$$
\Rightarrow \mathcal{L}(\theta^{(t)}) \le \left( \cfrac{\beta-1}{\alpha \cdot t + \cfrac{\beta-1}{\mathcal{L}(\theta^{(0)})}} \right)^{\cfrac{1}{\beta-1}}
$$

\end{proof}

\subsection{Proof of Theorem \protect\ref{application_two_layer}}
\label{application}

\application*

\begin{proof}

Let's first recall the basic setting under consideration. Here are our data assumptions for convenience.

\basicdata*

\linedata*

We consider the following network parameterized by \(\theta = (W, w_2)\) with an exponential margin-loss for the training objective:
\[
f(x;\theta) = w_2^T \phi(x W_1) , \quad \mathcal{L}(\theta) := \mathbb{E}_{(x,y) \sim P}[\ell(f(x; \theta), y)] .
\]
We consider a leaky ReLU activation $\phi(z) = \max(c', z)$ where we suppose $c' > 0$. Under the feedback alignment flow which we have the following for the trajectory of the model parameters wherever the following are defined:
\[
\tilde \nabla_{W_1} \mathcal{L}(\theta) = \mathbb{E}_k[y_k \ell'_k x_k^T b_2^T \text{diag}[\phi'(x_k W_1)] ], \quad \quad \tilde \nabla_{w_2} \mathcal{L}(\theta) = \mathbb{E}[y_k \ell_k' \text{diag}[\phi'(x_k W_1)] W_1^T x_k^T] .
\]
In particular, the feedback weights \(b_2\) replace \(w_2\) which in general makes the trajectory distinct from the gradient flow. 

Our goal is to establish alignment dominance. Actually, since we already know the update rule for \(w_2\) under feedback alignment equals the gradient we will just bound the first-term because:
\[
\langle \nabla \mathcal{L}(\theta), \tilde \nabla \mathcal{L}(\theta) \rangle =  \langle \nabla_{W_1} \mathcal{L}(\theta), \tilde \nabla_{W_1} \mathcal{L} \rangle + \Vert \nabla_{w_2} \mathcal{L}(\theta) \Vert_2^2 \ge \langle \nabla_{W_1} \mathcal{L}(\theta), \tilde \nabla_{W_1} \mathcal{L}(\theta) \rangle .
\]
Once we establish $(\alpha,\beta )$-alignment dominance we will be able to guarantee that the evolution of \(\theta^{(t)} = -\mathcal{L}(\theta^{(t)})\)  decreases the loss. There is a minor regularity concern because (leaky) ReLU networks may not be differentiable everywhere. However, at these non-differentiable points we may take whatever directional derivative yeilds the worst bound. Formally, we have the following:
\[
\begin{aligned}
\text{max} \left \lbrace \tilde \partial (\mathcal{L} \circ \theta)'(t)  \right \rbrace  = -\text{min} \left \lbrace \lim_{i \to \infty} \langle \nabla \mathcal{L}(\theta_i), \tilde \nabla \mathcal{L}(\theta_i) \rangle : \mathcal{L} \text{ differentiable at } \theta_i  \land \lim_{i \to \infty} \theta_i \to \theta  \right \rbrace \\
\ge  -\text{min} \left \lbrace \lim_{i \to \infty}  \alpha \cdot (\mathcal{L}(\theta_i) - \mathcal{L}^*)^{\beta} : \mathcal{L} \text{ differentiable at } \theta_i  \land \lim_{i \to \infty} \theta_i \to \theta  \right \rbrace \ge \alpha \cdot (\mathcal{L}(\theta) - \mathcal{L}^*)^{\beta}
\end{aligned}
\]
where the last-step follows from the continuity of \(\mathcal{L}(\theta)\). The proof can be formalized further to indicate the derivative may be "set-valued" e.g. Clarke differential. This is not central to the argument, however.

With the approach outlined we can proceed directly to the main proof.

\textbf{Part 1 and 2:} So we have:
\[
\begin{aligned}
\langle \nabla_{W_1} \mathcal{L}(\theta), \tilde \nabla_{W_1} \mathcal{L}(\theta) \rangle = \mathbb{E}_{i,j} [ \langle y_i \ell'_i x_i^T w_2^T \text{diag}[\phi'(x_i W_1)] , y_j \ell'_j x_j^T b_2^T \text{diag}[\phi'(x_j W_1)] \rangle ] \\
= \mathbb{E}_{i,j} [ \text{Tr}(y_i \ell'_i x_i^T \underbrace{w_2^T \text{diag}[\phi'(x_i W_1) ] \text{diag}[\phi'(x_j W_1) ] b_2}_{w_2^T T_{ij} b_2} x_j \ell'_j y_j) ] .
\end{aligned}
\]
Our goal is to lower-bound the whole expression. Notice that the inner-term is a scalar so we will lower-bound \(w_2^T T_{ij} b_2 > c\) for some constant $c > 0$ and then extract it from the expectation. For (sign) feedback alignment we know $b_2 = \text{sign}(w_2)$ from Lemmas \ref{angle} and \ref{sign_angle}. Additionally, each entry of $T$ is lower-bounded by the leaky factor of the leaky ReLU activation $\phi(z) = \max(c' \cdot z, z)$ which we suppose is $c' > 0$. So we obtain:

$$w_2^T T_{ij} b_2 = \sum_k w_2[k] \cdot b_2[k] \cdot T_{kk} \ge \langle w_2 , b_2 \rangle \cdot c' .$$

To lower-bound $\langle w_2, b_2 \rangle$ we effectively need to show alignment. We can use our initialization plus Theorem \ref{conservation_law} to do this:
$$
\langle w_{L}^{(t)}[j], b_{L}[j] \rangle $$
$$= \frac{1}{2} \Vert w_{L-1}^{(t)}[j] \Vert_F^2 + (\langle w_{L}^{(0)}[j], b_{L} \rangle -  \frac{1}{2} \Vert w_{L-1}^{(0)}[j] \Vert_F^2)$$
$$
=  \frac{1}{2} \Vert w_{L-1}^{(t)}[j] \Vert_F^2 + (\Vert w_L^{(0)} \Vert_F^2 -  \frac{1}{2} \Vert w_{L-1}^{(0)}[j] \Vert_F^2)$$
$$
\ge \frac{1}{2} \Vert w_{L-1}^{(t)}[j] \Vert_F^2 > 0.
$$

The first equality follows from Theorem \ref{conservation_law}, the last two steps follow from our initialization assumption. Therefore, there is some constant $c > 0$ such that $w_2^T T_{ij} b_2 > c$ as desired. Combining our results we obtain:
\[
\begin{aligned}
\langle \nabla_{W_1} \mathcal{L}(\theta), \tilde \nabla_{W_1} \mathcal{L}(\theta) \rangle &\ge c \cdot \mathbb{E}_{i,j} [ \text{Tr}(y_i \ell'_i x_i^T x_j \ell'_j y_j) ] \\
&\ge c \cdot \mathbb{E}_{i,j} [ \ell'_i  \ell'_j \langle x_i y_i ,  x_j y_j \rangle ] \\
&\ge c \cdot \mathbb{E}_{i,j} [ \text{Tr}(y_i \ell'_i x_i^T x_j \ell'_j y_j) ] \\
&\ge c \cdot \mathbb{E}_{i,j} [ \ell_i  \ell_j \langle x_i y_i ,  x_j y_j \rangle ] .
\end{aligned}
\]
The last step follows because we are considering the exponential margin loss.

\textbf{Assume Orthogonal Separability:} By our assumption of orthogonal separability we know that \(\langle x_i y_i ,  x_j y_j \rangle > \gamma\) and we obtain:
\[
\langle \nabla_{W_1} \mathcal{L}(\theta), \tilde \nabla_{W_1} \mathcal{L}(\theta) \rangle 
\ge c \cdot \gamma \cdot \mathcal{L}(\theta)^2 \\
\Rightarrow \partial^{\circ} (\mathcal{L} \circ \theta)'(t) \le - c \cdot \gamma \cdot \mathcal{L}(\theta)^2 .
\]
This establishes $(c \cdot \gamma, 2)$-alignment dominance. 

\textbf{Assume Near Orthogonality:} The first part of the proof establishes that,
$$
\langle \nabla_{W_1} \mathcal{L}(\theta), \tilde \nabla_{W_1} \mathcal{L}(\theta) \rangle \ge c \cdot \mathbb{E}_{i,j}[\ell_i \ell_j \langle x_i y_i , x_j y_j \rangle] 
$$
where $c > 0$ is some data independent constant that only depends on activation and initialization. Our proof strategy is to lower bound the expectation in two stages. Because the pairs $(i,j)$ are sampled independently we can break the full expectation into two parts.

Let $j \in [n]$ indicate an arbitrary sample. By our assumption of near orthogonality we know that $\Vert x_i \Vert_2^2 \ge n \cdot (\gamma + \epsilon)$ where $\gamma := \max_{i \not = j} \vert \langle x_i,  x_j \rangle \vert$ and $\epsilon > 0$ so we obtain:
$$
\mathbb{E}_i[\ell_i \langle x_i y_i , x_j y_j \rangle] = \frac{1}{n} \sum_{i = 1}^n \ell_i \langle x_i y_i , x_j y_j \rangle 
= \frac{1}{n}\ell_j \langle x_j y_j, x_j y_j \rangle + \frac{1}{n} \sum_{i \not = j} \ell_i \langle x_i y_i , x_j y_j \rangle $$
$$
\ge \frac{1}{n} \ell_j \left[ n \cdot (\gamma + \epsilon) \right] + \frac{1}{n} \sum_{i \not = j} \ell_i \langle x_i y_i , x_j y_j \rangle 
\ge \ell_j \cdot (\gamma + \epsilon) + \frac{1}{n} \sum_{i \not = j} \ell_i \langle x_i y_i , x_j y_j \rangle   .
$$
The first inequality follows from our definition of near orthogonality. From the definition of $\gamma := \max_{i \not = j} |\langle x_i, x_j \rangle |$ we see that the inter-sample inner-product satisfies:
$$
\langle x_i, y_i, x_j, y_j \rangle \ge -| \langle x_i, x_j \rangle| \ge - \max_{i \not = j} |\langle x_i, x_j \rangle | := - \gamma .
$$
Therefore, we can obtain the following bound:
$$
\mathbb{E}_i[\ell_i \langle x_i y_i , x_j y_j \rangle] 
\ge \ell_j \cdot (\gamma + \epsilon) - \frac{1}{n} \sum_{i \not = j} \ell_i \gamma $$
$$
\ge  \ell_j \epsilon + \gamma \left( \ell_j- \frac{1}{n} \sum_{i = 1}^n \ell_i \right) .
$$
The last inequality follows because the exponential margin loss is non-negative which allows us to add back in the removed term when $i = j$. Now we consider the full expectation:
$$
\mathbb{E}_{i,j}[\ell_i \ell_j \langle x_i y_i , x_j y_j \rangle] $$
$$
\ge \frac{1}{n} \sum_{j = 1}^n \ell_j^2 \epsilon + \gamma \left( \frac{1}{n} \sum_{j = 1}^n \ell_j^2 - \left( \frac{1}{n} \sum_{j=1}^n \ell_j \right) \left( \frac{1}{n} \sum_{i = 1}^n \ell_i \right) \right) .
$$
We simplify to obtain the following:
$$
\mathbb{E}_{i,j}[\ell_i \ell_j \langle x_i y_i , x_j y_j \rangle] $$
$$
\ge \frac{1}{n} \sum_{j = 1}^n \ell_j^2 \cdot \epsilon + \frac{1}{n^2} \gamma \cdot \left( n \cdot \sum_{j = 1}^n \ell_j^2 - \left( \sum_{j = 1}^n \ell_j \right) \left( \sum_{i = 1}^n \ell_i \right) \right) $$
$$
= \frac{1}{n} \sum_{j = 1}^n \ell_j^2 \cdot \epsilon + \frac{1}{n^2} \gamma \cdot \left( n \cdot \sum_{j = 1}^n \ell_j^2 - \left( \sum_{j = 1}^n \ell_j \right)^2 \right) .
$$
In the first step we factor out $n$. In the second step we combine summands. Since $\ell_j \ge 0$ we can use the Cauchy-Schwartz inequality to show the second term is non-negative:
$$
\left( \sum_{j = 1}^n \ell_j \right)^2 = \left( \sum_{j = 1}^n 1 \cdot \ell_j \right)^2 = | \langle 1, \ell \rangle|^2 \le n \cdot \sum_{j = 1}^n \ell_j^2  $$
$$
\Rightarrow n \cdot \sum_{j = 1}^n \ell_j^2 - \left( \sum_{j = 1}^n \ell_j \right)^2 \ge 0 .
$$
This also implies the following bound on the first term:
$$
\frac{1}{n} \cdot \sum_{j = 1}^n \ell_j^2 \ge  \left(\frac{1}{n} \sum_{j = 1}^n \ell_j \right)^2 = \mathcal{L}(\theta)^2 .
$$
Putting the two bounds together we obtain:
$$
\mathbb{E}_{i,j}[\ell_i \ell_j \langle x_i y_i , x_j y_j \rangle] \ge \mathcal{L}(\theta)^2 \cdot \epsilon + 0 = \epsilon \cdot \mathcal{L}(\theta)^2.
$$
To finish, we substitute our results:
$$
\Rightarrow \langle \nabla_{W_1} \mathcal{L}(\theta), \tilde \nabla_{W_1} \mathcal{L}(\theta) \rangle 
\ge c \cdot \epsilon \cdot \mathcal{L}( \theta)^2 $$
$$
\Rightarrow \partial^{\circ} (\mathcal{L} \circ \theta)'(t) \le - c \cdot \epsilon \cdot \mathcal{L}(\theta)^2 .
$$
This establishes $(c \cdot \epsilon, 2)$-alignment dominance.  

\end{proof}

\end{document}